%% file: main.tex
\documentclass{article} % For LaTeX2e
\usepackage{iclr2027_conference,times}
\usepackage{hyperref}
\usepackage{url}
\usepackage{booktabs}
\usepackage{adjustbox}     % max width=\textwidth to keep wide tables within the page
\usepackage{microtype}
\usepackage{xcolor}
\usepackage{subcaption}
\usepackage[most]{tcolorbox}
\usepackage{wrapfig}
\usepackage[T1]{fontenc}
\input{subtex/package}

\input{subtex/math}

\input{subtex/macro}

\usepackage{multirow}
\usepackage{colortbl}      % for \cellcolor

\definecolor{RoyalBlue}{rgb}{0.25, 0.41, 0.88}

\colorlet{myred}{red!85!black}

\usepackage{enumitem} % for itemize leftmargin
\usepackage[capitalise]{cleveref}

\crefname{equation}{}{}
\crefname{figure}{Fig.}{Figs.}
\crefname{section}{Sec.}{Secs.}
\crefname{appendix}{App.}{Apps.}
\crefname{table}{Tab.}{Tabs.}
\crefname{theorem}{Thm.}{Thms.}
\crefname{assumption}{Assump.}{Assumps.}
\crefname{definition}{Def.}{Defs.}
\crefname{remark}{Rmk.}{Rmks.}
\theoremstyle{remark}
\crefname{algocf}{Alg.}{Algs.}
\crefname{algorithm}{Alg.}{Algs.}

\usepackage{color,soul}

\usepackage{multirow}
\usepackage{colortbl}      % for \cellcolor
\definecolor{RoyalBlue}{rgb}{0.25, 0.41, 0.88}
\newcommand{\cellgray}{\cellcolor{gray!12}}
\newcommand{\piold}{\pi_{\mathrm{old}}}
\newcommand{\piref}{\pi_{\mathrm{ref}}}

\newcommand{\pitheta}{\pi_{\theta}}

\newcommand{\vref}{\bm{v}_{\mathrm{ref}}}
\newcommand{\vold}{\bm{v}_{\mathrm{old}}}

\newcommand{\rhotheta}{\rho_{\theta}}

\colorlet{metablue}{blue!60!green}
\hypersetup{
    colorlinks,
    linkcolor={metablue},
    citecolor={metablue},
    urlcolor={metablue}
}

\usepackage{tikz}
\usetikzlibrary{positioning}

\usepackage{color,soul}

\usepackage{pifont}
\usepackage{ifsym}
\colorlet{myred}{red!85!black}
\usepackage{enumitem} % for itemize leftmargin
\usepackage[capitalise]{cleveref}

\newcommand{\cL}{\mathcal{L}}

\renewcommand{\E}{\operatorname{\mathbb{E}}}

\newcommand{\normal}{\mathcal{N}}
\newcommand{\unif}{\operatorname{Unif}}
\newcommand{\x}{\bm{x}}
\renewcommand{\v}{\bm{v}}
\newcommand{\I}{\bm{I}}
\newcommand{\zero}{\bm{0}}
\newcommand{\beps}{\bm{\epsilon}}

\title{Scaling Reinforcement Learning for Diffusion Models via Velocity Matching}

\author{%
\textbf{
Jaemoo Choi$^1$, \quad
Wei Guo$^1$, \quad
Yuchen Zhu$^1$, \quad
}
\\
\textbf{
Arash Vahdat$^2$, \quad
Molei Tao$^1$, \quad
Julius Berner$^2$, \quad
Yongxin Chen$^{1,2}$
}
\\
$^1$Georgia Institute of Technology, $^2$NVIDIA
\\
\texttt{\{jchoi843, wei.guo, yzhu738, mtao, yongchen\}@gatech.edu}
\\
\texttt{\{avahdat, jberner, yongxinc\}@nvidia.com}
}

\iclrfinalcopy % Uncomment for camera-ready version, but NOT for submission.
\begin{document}

\maketitle

\begin{abstract}
Reward fine-tuning is becoming an important tool for adapting diffusion models to human preferences and task-specific objectives, but existing methods largely inherit policy-gradient machinery from large language models. 
Unlike autoregressive models, diffusion models do not provide tractable likelihoods for generated samples. As a result, current approaches either construct trajectory likelihoods from stochastic denoising transitions or approximate endpoint likelihoods with evidence lower bound, introducing additional computation and algorithmic complexity.
We demonstrate that this likelihood-based machinery is not necessary for effective diffusion reward fine-tuning. We propose reward-based velocity matching (RVM), a simple trajectory-free update that acts directly on the velocity field. 
RVM reinforces directions associated with high-reward generations, suppresses those with low reward, and involves an optional anchor term controlling drift from a reference velocity.
Notably, it provides a general framework that recovers recent fine-tuning methods, including RAM and DiffusionNFT, as special cases.
Across various large-scale diffusion models reward fine-tuning tasks, RVM is competitive with or outperforms trajectory-based policy-gradient methods under substantially reduced training cost.
% We evaluate RVM across various large-scale diffusion models. Across these settings, direct velocity-matching methods are competitive with or outperform trajectory-based policy-gradient methods while substantially reducing training cost. On Wan2.1-T2V-1.3B, RVM requires 525 GPU-hours compared with 1,159 GPU-hours for FlowGRPO, while improving VBench Overall from 75.91 to 84.13. 
We further find that, once the velocity update is simplified, the particular loss variant matters less than reward and anchor design. For video generation, standard preference rewards can favor visually clean but nearly static outputs; introducing a new dynamic-tracking reward that substantially improve motions while improving overall VBench performance. These results suggest that scalable reward fine-tuning for diffusion models is better posed in the native velocity representation than as likelihood-based policy optimization.
% These results suggest that scalable reward fine-tuning for diffusion models can be better formulated directly in the model's native velocity representation rather than likelihood-based policy optimization.
% can be better formulated directly from velocity matching rather than likelihood-based policy optimization
\begin{center}
{\textbf{Project page}: \url{https://jaemoo-choi.github.io/RVM/}}
\end{center}
% ICLR 2027 Anonymous project page policy:
% https://iclr.cc/Conferences/2027/AuthorGuidelines#:~:text=Q%3A%20Can%20I%20submit,the%20risk%20of%20rejection
\end{abstract}

\section{Introduction}
\begin{figure}[t]
    \centering
    \begin{minipage}[c]{0.59\textwidth}
        \centering
        \includegraphics[width=\linewidth]{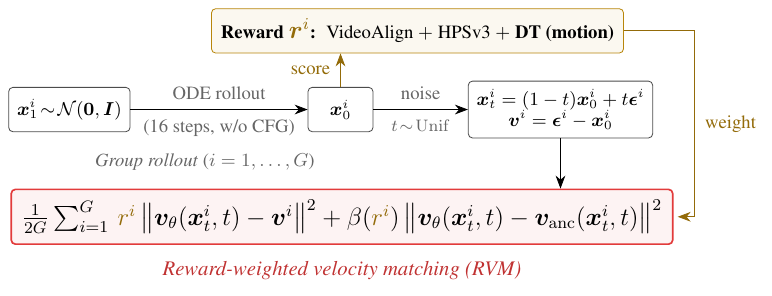}
    \end{minipage}\hfill
    \begin{minipage}[c]{0.38\textwidth}
        \centering
        \includegraphics[width=\linewidth]{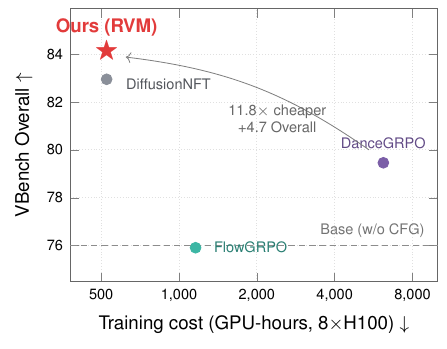}
    \end{minipage}
    \vspace{-0.5em}
    \caption{\textbf{Overview of RVM.} \emph{Left}: RVM fine-tunes Wan2.1-1.3B with a simple reward-weighted velocity-matching loss. Grouped rollouts are scored using public reward models together with our dynamic-tracking (DT) reward, and each generated sample $\bm{x}^i_0$ is noised once to $\bm{x}^i_t$ and regressed toward its velocity target $\bm{v}^i=\bm{\epsilon}^i-\bm{x}^i_0$, with an optional anchor velocity $\bm{v}_{\mathrm{anc}}$ controlling model drift, avoiding trajectory storage and likelihood estimation. \emph{Right}: on Wan2.1-1.3B, RVM achieves the best VBench Overall score at a fraction of the training cost of trajectory-based methods.}
    \label{fig:teaser}
    % \vspace{-1.0em}
\end{figure}

Diffusion and flow models \citep{ho2020denoising, song2020score, liu2023flow, lipman2023flow, albergo2023stochastic} have become a central class of generative models, achieving strong performance across image, video, and other high-dimensional generation tasks. 
Reward-based fine-tuning has become an important approach for adapting these models to human preferences~\citep{wu2023human, xu2023imagereward} and downstream criteria such as semantic alignment, visual quality, motion, and physical plausibility~\citep{wang2026tagrpoboostinggrpoimagetovideo, tang2026vgrpoonlinereinforcementlearning, xue2026systematicposttrainframeworkvideo, li2026rethinkingrewardsignalsvideo}. As these models scale, however, efficiency becomes increasingly important because each rewarded sample requires an expensive model rollout. 
This challenge is particularly pronounced for large-scale models such as video diffusion models, where rollout cost can become a major bottleneck.

A dominant approach is to transfer reinforcement-learning (RL) algorithms developed for autoregressive large language models (LLMs), such as PPO and GRPO~\citep{schulman2017proximal, shao2024deepseekmath, zhang2023adding}, to diffusion models.
For language models, policy-gradient methods are natural because sequence likelihoods factorize into tractable next-token probabilities. In diffusion models, however, the endpoint likelihood is intractable. 
Therefore, existing methods recover likelihood-based policy updates indirectly.
One line of work, named \textit{trajectory-based} methods, avoid the intractable final-sample likelihood by using tractable one-step transition probabilities $\pi_\theta(\bm{x}_{t-\Delta t}\mid \bm{x}_t)$ along the sampled denoising trajectory.
This is costly, requiring stochastic rollouts and trajectory storage, and can yield high-variance updates because the final reward is assigned through random intermediate transitions.

Another line of works approximate the marginal likelihood of the final sample through the evidence lower bound (ELBO) \citep{xue2025advantage, choi2026rethinkingdesignspacereinforcement}.
Given a generated sample $\bm{x}_0$, the ELBO provides a likelihood surrogate from independently noised states $\bm{x}_t$, removing reverse-trajectory storage and decoupling training from the rollout sampler, allowing fast deterministic ODE sampling. 
However, these methods still originate from a likelihood-based policy-gradient formulation and retain an estimated policy ratio. This raises the central question of this work:

\graybox{
\vspace{1.0em}
{\leftskip=8pt \rightskip=8pt
\emph{If the model update produced by an ELBO-based policy gradient already reduces to reward-weighted velocity regression, is likelihood estimation necessary at all?}\par}
\vspace{0.3em}
}

We find that the likelihood perspective is unnecessary for effective reward fine-tuning. Rather than first constructing a policy-gradient objective and then approximating its intractable likelihood ratio, we act directly on the native training representation of the diffusion model. Given generated endpoints $\bm{x}_0^i$, we forward-noise each sample to a single intermediate state $\bm{x}_t^i$ with velocity target $\bm{v}^i$, and use the reward as a signed preference signal for that velocity direction. This gives reward-based velocity matching (RVM),
\begin{align*}
\cL_{\mathrm{rvm}}
=
\tfrac{1}{2}\,\mathbb{E}
\Big[\,
r^i\,\big\|\vtheta(\bm{x}_t^i,t)-\bm{v}^i\big\|_2^2
+
\beta(r^i)\,\big\|\vtheta(\bm{x}_t^i,t)-\bm{v}_{\mathrm{anc}}(\bm{x}_t^i,t)\big\|_2^2
\,\Big],
\end{align*}
where $r^i$ is a reward-derived weight and $\bm{v}_{\mathrm{anc}}$ is an optional anchor velocity. Positive reward weights reinforce the velocity direction associated with a generated sample, while negative weights suppress it. The anchor provides an independent mechanism for controlling model drift and retaining desirable properties of a reference model.

Despite its simplicity, RVM generalizes several recent loss functionals, recovering Reinforce Adjoint Matching (RAM)~\citep{bergmeister2026reinforceadjointmatchingscaling} and DiffusionNFT~\citep{zheng2025diffusionnft} as special cases under particular choices of reward design and anchor velocity.  
This unification shifts attention away from the precise algebraic form of the velocity loss and toward the choices that remain: how rewards are transformed into update weights, how the model is anchored, and how samples are generated.

We test this view across three generative settings: text-to-image generation with Stable Diffusion 3.5 \citep{esser2024scaling}, text-to-video generation with Wan2.1-T2V-1.3B \citep{wan2025wan}, and image-to-video generation with SkyReels-I2V \citep{chen2025skyreelsv2infinitelengthfilmgenerative}. Across these settings, trajectory-independent velocity-based methods consistently match or outperform trajectory-based baselines. They are also substantially cheaper to train. As demonstrated in \cref{fig:teaser}, on Wan2.1-T2V-1.3B, our RVM recipe uses a 16-step deterministic ODE rollout \citep{zhang2023fast, lu2022dpmsolver, zhang2023gddim} without classifier-free guidance (CFG) and is much efficient, yet achieves higher performance at much lower training cost than trajectory-based methods such as DanceGRPO~\citep{xue2025dancegrpo} and FlowGRPO~\citep{liu2025flow}.

Our experiments on video models further reveal reward design as a key factor. Standard preference and visual-quality rewards can improve even as generated videos become nearly static. To address this failure mode, we introduce a dynamic-tracking (DT) reward based on optical flow that explicitly encourages meaningful motion. Adding DT alone substantially improves VBench Dynamic Degree while also increasing the overall VBench score. These results suggest that reward specification, rather than increasingly elaborate policy-loss design, may be the more influential choices for scalable diffusion-model post-training. Our \textbf{main contributions} are:  
\begin{itemize}[leftmargin=*]
    \item We introduce RVM, a simple trajectory-independent method that incorporates reward feedback directly into the native velocity-matching representation of a diffusion model, avoiding policy-ratio estimation and trajectory storage.
    \item We show that RVM provides a general framework that recovers recent methods, including RAM and DiffusionNFT, as special cases.
    \item Through extensive experiments, we show that velocity-based methods can substantially reduce training cost while matching or improving generation quality. Moreover, the differences among velocity-loss variants are relatively small compared with the effects of reward and anchor design.
    \item We extend this framework to large-scale video diffusion models and identify a tendency of generic preference rewards to favor static videos and introduce a dynamic-tracking reward that substantially improves motion while preserving overall video quality.
\end{itemize}

\section{Background}

\subsection{Diffusion and Flow Models}
Diffusion and flow-based models \citep{song2020score, lipman2023flow, albergo2023stochastic, liu2023flow} learn a data distribution on $\R^{d}$ by interpolating clean data $\x_0 \sim \pdata$ and an independent noise $\beps\sim\normal(\zero,\I)$ through $\x_t=(1-t)\x_0+t\beps$, or equivalently, $\x_t|\x_0\sim\normal((1-t)\x_0,t^2\I)$. 
The model learns a velocity field $\v_\theta(\x_t,t)$ through a weighted regression: 
$$\min_\theta\E_{t,\x_0,\beps} w(t)\|\v_\theta(\x_t,t)-\v\|^2,$$ where $t\sim\unif(0,1)$, $\v:=\beps-\x_0$, and $w(t)$ is a positive weight function.
To generate samples from the model, the standard practice is to simulate the ODE $\dot\x_t=\v_\theta(\x_t,t)$ from $\x_1\sim\normal(\zero,\I)$ to $\x_0$, or use the equivalent SDE samplers.

\subsection{Reward Fine-tuning and Policy-Gradient Objectives}

\paragraph{Reward fine-tuning via policy gradients.}
Following RL fine-tuning of LLMs, reward fine-tuning treats a generative model as a policy $\pitheta$ and maximizes a reward $R$ under a KL regularization toward a pretrained reference policy $\piref$,
\begin{align}
\label{eq:primary_task}
    \max_{\theta} \; \underset{\bm{x} \sim \pi_{\theta}}{\mathbb{E}} \big[ R(\bm{x})\big] - \beta \operatorname{KL}(\pi_{\theta} \| \piref),
\end{align}
where $\beta$ sets the penalty strength and the optimal policy is $\pi_{*}(\bm{x}) \propto \piref(\bm{x})\exp(R(\bm{x})/\beta)$. The prompt $\bm{c}$ is always part of the input, i.e. $\bm{x} \sim \pitheta (\cdot \mid \bm{c})$ and $R$ takes both $(\bm{x},\bm{c})$ as input. We omit it for brevity. Policy-gradient methods optimize \cref{eq:primary_task} by reweighting the log-likelihood of each sample by its reward (REINFORCE \citep{sutton1999policy, li2025back});
GRPO~\citep{shao2024deepseekmath} further uses group-normalized rewards and a clipped likelihood ratio, i.e.,
\begin{align}
\label{eq:grpo}
\mathcal{L}_{\mathrm{grpo}}
=
\mathbb{E}_{\bm{x}^{1:G}_0 \sim \piold}
\Big[
    \min\Big(
        \rhotheta(\bm{x}^i)r^i_{\mathrm{grpo}},\,
        \operatorname{clip}\big(
            \rhotheta(\bm{x}^i),
            1-\varepsilon,
            1+\varepsilon
        \big) r^i_{\mathrm{grpo}}
    \Big)
\Big],
\tag{GRPO}
\end{align}
where $\varepsilon$ is the clipping strength, $\rhotheta$ is the likelihood ratio, and $r^i_{\mathrm{grpo}}$ are the standardized group-relative reward within a group of $G$ samples $\bm{x}^{1:G}\sim\piold$ drawn per prompt,
\begin{equation*}
\rhotheta(\bm{x}) = \frac{\pitheta(\bm{x})}{\piold(\bm{x})},
\qquad
r^i_{\mathrm{grpo}} = \frac{R(\bm{x}^i) - \operatorname{mean}_{g} R(\bm{x}^g)}{\operatorname{std}_{g} R(\bm{x}^g)} .
\end{equation*}
Applying \cref{eq:grpo} to diffusion and flow models requires the likelihood ratio $\rhotheta$, which is not directly available. Existing works follow two routes: \textit{trajectory-based approaches} estimate the ratio from the per step transition probabilities along the sampled denoising trajectory, whereas \textit{ELBO-based approaches} estimate it at the final sample through \cref{eq:elbo}.

\paragraph{Trajectory-based approaches.}
For diffusion and flow models, the marginal likelihood $\pitheta(\bm{x}_0)$ of a generated sample is intractable \citep{song2020score, benton2024denoising}, so the policy ratio $\rhotheta(\bm{x}_0)$ in \cref{eq:grpo} cannot be formed directly.
Trajectory-based methods address this difficulty by replacing the intractable marginal log-likelihood $\log\pitheta(\bm{x}_0)$ with the tractable per-step Gaussian transition log-likelihood $\log\pitheta(\bm{x}_{t-\Delta t}\mid\bm{x}_t)$, the current policy is updated at each denoising step through the Gaussian transitions induced by an SDE sampler \citep{black2023training, fan2023dpok}. FlowGRPO \citep{liu2025flow} clips these per-step ratios individually, and DanceGRPO \citep{xue2025dancegrpo} carries the recipe over to video, both at the cost of simulating the full trajectory under CFG. Moreover, tying the objective to per-step transition likelihoods makes the update dependent on the particular SDE sampler used during training. It can also introduce additional variance, since the next denoising state $\bm{x}_{t-\Delta t}$ is itself a stochastic intermediate sample and need not be closer to the clean endpoint $\bm{x}_0$ than the current state $\bm{x}_t$.

\paragraph{ELBO-based approaches.}
\citet{xue2025advantage, choi2026rethinkingdesignspacereinforcement} keep a policy-gradient objective for the final sample $\bm{x}^i_0$. Their exact policy gradient (EPG) is written as follows: 
\begin{align}
\mathcal{L}_{\mathrm{epg}}
=
\mathbb{E}_{\bm{x}^{1:G}_0 \sim \piold}
\big[
- \operatorname{sg}[\rhotheta(\bm{x}^{i}_0)]\,r^i_{\mathrm{epg}} \log \rhotheta(\bm{x}^{i}_0)
+
\beta \operatorname{kl}(\bm{x}_0^i)
\big], \tag{EPG}\label{eq:epg}
\end{align}
where  $\operatorname{sg}$ stands for stop gradient, $\operatorname{kl}(\bm{x}_0^i)$ is an estimation of a per-sample KL term toward the reference, and the reward $r^i_\mathrm{epg} = R(\bm{x}^i) - \mathrm{mean}_g R (\bm{x}^g)$.
The key step is the substitution of the intractable log-ratio $\log\rhotheta(\bm{x}^i_0):=\log\pitheta(\bm{x}^i_0)-\log\piold(\bm{x}^i_0)$ with \cref{eq:elbo} estimation as follows:
\begin{align}\label{eq:elbo}
\log\pi_{\{\theta,{\rm old}\}}(\bm{x}_0^i)
\approx
-\mathbb{E}_{t\sim\unif(0,1),~\bm{\epsilon}^i\sim\mathcal{N}(\bm{0},\bm{I})}
\big[
    w(t)\|\v_{\{\theta,{\rm old}\}}(\bm{x}_t^i,t)-\bm{v}^i\|_2^2
\big], \tag{ELBO}
\end{align}
where $\bm{x}_t^i = (1-t)\,\bm{x}_0^i + t\,\bm{\epsilon}^i$ and $\bm{v}^i = \bm{\epsilon}^i - \bm{x}_0^i$.
We denote $\vtheta$, $\vold$ and $\vref$ as the velocity fields of the current policy $\pitheta$, sampling policy $\piold$, and the reference policy $\piref$. 
This objective requires storing only the final generated samples $\bm{x}_0^i$; during training, each sample is independently noised to a single intermediate state $\bm{x}_t^i$, rather than requiring the full denoising trajectory used during generation.

Although ELBO-based methods derive their objectives through likelihood-based policy gradients, the resulting model update ultimately reduces to reward-weighted velocity regression at a noised sample. This motivates a more direct formulation: rather than starting from likelihood estimation, we formulate reward fine-tuning directly as reward-weighted velocity matching.

\begin{figure}[t]
    \centering
    \captionsetup[subfigure]{skip=0pt}
    \includegraphics[width=0.8\linewidth]{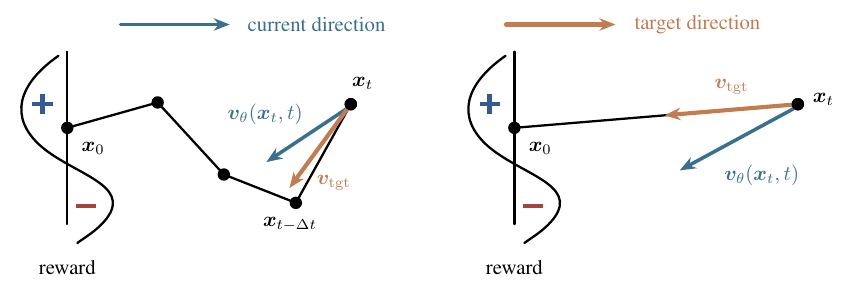}\par\vspace{-1.2em}
    \subcaptionbox{Trajectory-based.}[0.39\linewidth]{}
    \hspace{0.02\linewidth}
    \subcaptionbox{Velocity-based.}[0.39\linewidth]{}
    \caption{\textbf{Comparison between trajectory-based and velocity-matching loss.} A trajectory-based loss raises the likelihood of the sampled one-step transition, so it moves toward the next denoising state $\bm{x}_{t-\Delta t}$. Because $\bm{x}_{t-\Delta t}$ is a stochastic intermediate that need not lie closer to the clean endpoint $\bm{x}_0$ than $\bm{x}_t$, this target direction is noisy and not necessarily optimal. In contrast, our velocity-matching loss instead regresses toward the velocity $\bm{v}_\mathrm{tgt}=\bm{\epsilon}-\bm{x}_0$ pointing at the clean sample $\bm{x}_0$. When the reward weight is positive the update moves the predicted velocity toward $\bm{x}_0$, and when it is negative it drives the velocity further away from $\bm{x}_0$. A formal derivation of the trajectory-based regression target is given in \cref{app:traj_vel}.}
    \label{fig:traj_vs_vel}
\end{figure}

\section{Designing Simple and Efficient Diffusion Fine-tuning}
\label{sec:method}

In this section, we present a simple velocity-matching recipe for reward fine-tuning of video diffusion models. In \cref{sec:bg-objectives}, we introduce our objective alongside existing velocity-based losses. In \cref{sec:newloss}, we show how these losses connect. We then turn to reward shaping in \cref{subsec:ablation}, which our analysis identifies as the factor that governs performance.

\subsection{Simplified and Velocity-based Objectives}
\label{sec:bg-objectives}

\paragraph{Our simplified loss.}
We propose a simple objective for reward fine-tuning of diffusion models that consists of two velocity-regression terms, without using policy ratios or assigning a likelihood interpretation to the loss. The first term performs reward-weighted velocity matching, pulling the predicted velocity toward the velocity directions of high-reward samples. The second term anchors the prediction to an \textit{anchor velocity} $\bm{v}_\mathrm{anc}$, preventing excessive policy drift during fine-tuning.
Precisely, we draw a group of samples $\{\bm{x}^i_0\}^G_{i=1}$ from a sampling policy $\piold$, noise each to a single state $\bm{x}_t^i = (1-t)\,\bm{x}_0^i + t\,\bm{\epsilon}^i$ with $t\sim\unif(0,1)$, $\bm{\epsilon}^i\sim\mathcal{N}(\bm{0},\bm{I})$, and train the current velocity model $\vtheta$ with these two regression terms. This gives our \emph{reward-weighted velocity matching} (RVM) objective:

\graybox{%
\begin{align}
\cL_{\mathrm{rvm}}
=
\tfrac{1}{2} \ \mathbb{E}_{\bm{x}^{1:G}_0 \sim \piold}
\left[
r^i_{\mathrm{rvm}}\,
\|\vtheta(\bm{x}_t^i,t)-\bm{v}^i\|_2^2
+
\beta (r^i_\mathrm{rvm}) \,\|\vtheta(\bm{x}_t^i,t)-\bm{v}_{\mathrm{anc}}(\bm{x}_t^i,t)\|_2^2
\right], \tag{RVM}\label{eq:rvm}
\end{align}
}

where $\bm{v}^i := \bm{\epsilon}^i - \bm{x}_0^i$ and $\beta:\R \rightarrow \R$ is an anchor strength function. This objective is governed by two design choices: \textbf{(I)} the reward $r_{\mathrm{rvm}}^i$ and \textbf{(II)} the anchor velocity $\bm{v}_{\mathrm{anc}}$. The reward $r_{\mathrm{rvm}}^i$ acts as a signed preference signal for the velocity direction associated with sample $\bm{x}_0^i$. When $r_{\mathrm{rvm}}^i>0$, the sample velocity $\bm{v}^i$ is treated as a desirable direction, and the model is encouraged to match it. When $r_{\mathrm{rvm}}^i<0$, the same velocity direction is treated as undesirable, and the update discourages the model from following directions that would reproduce such samples. The anchor term provides a reference velocity around which this reward-driven attraction or repulsion is controlled.
The anchor velocity $\bm{v}_{\mathrm{anc}}$ regularizes the update, keeping $\vtheta$ close to a trusted velocity field with strength $\beta$. We can use the frozen reference velocity $\vref$ or an EMA-updated velocity of the current policy. In \cref{subsec:ablation}, we ablate over various choices of the reward $r^i_\mathrm{rvm}$ and anchor velocity $\bm{v}_\mathrm{anc}$.

\paragraph{Efficiency of RVM.}
As shown in \cref{fig:traj_vs_vel}, trajectory-based losses reinforce the likelihood of the sampled one-step transition $\pitheta(\bm{x}_{t-\Delta t}\mid \bm{x}_t)$, and therefore reinforce the stochastic move from $\bm{x}_t$ to the next denoising state $\bm{x}_{t-\Delta t}$. Since $\bm{x}_{t-\Delta t}$ is an intermediate state sampled by the SDE solver, this direction can vary across rollouts and need not provide the most direct signal toward the clean endpoint $\bm{x}_0$. In contrast, our velocity-matching loss uses the endpoint-defined target $\bm{v}^i=\bm{\epsilon}^i-\bm{x}^i_0$, obtained by noising the final sample to a single state. A positive reward encourages the predicted velocity to align with this clean-sample direction, while a negative reward pushes it away from directions associated with undesirable samples. Intuitively, this gives RVM a more direct and potentially lower-variance learning signal than trajectory-based losses.

\subsection{Connection between the Objectives}\label{sec:newloss}
While RAM \citep{bergmeister2026reinforceadjointmatchingscaling} and DiffusionNFT \citep{zheng2025diffusionnft} are motivated from control-theoretic and contrastive-learning perspectives, respectively, we show that both can be interpreted as special cases of RVM.

\paragraph{Connection to RAM.}
The RAM objective can be written as follows:
\begin{align}
    \cL_{\mathrm{ram}}
    =\tfrac{1}{2} \
    \mathbb{E}_{\bm{x}^{1:G}_0\sim\pitheta}
    \left[
    \left\|
    \vtheta(\bm{x}_t^i,t)
    -
    \operatorname{sg}
    \left[
    \bm{v}_{\mathrm{ref}}(\bm{x}_t^i,t)
    +
    r^i_{\mathrm{ram}}\big(\bm{v}^i-\vtheta(\bm{x}_t^i,t)\big)
    \right]
    \right\|_2^2
    \right],
    \tag{RAM}
\label{eq:ram}
\end{align}
and the gradient of \cref{eq:ram} can be written as:
\footnote{We omit expectations in the gradient for simplicity.}
\begin{align}
\nabla_{\vtheta} \cL_{\mathrm{ram}} = \vtheta - \operatorname{sg}\!\big[\vref + r^i_{\mathrm{ram}}(\bm{v}^i-\vtheta)\big] = r^i_{\mathrm{ram}}\,(\vtheta-\bm{v}^i) + (\vtheta-\vref),
\label{eq:ram-grad-main}
\end{align}
which exactly matches the RVM gradient $r^i(\vtheta-\bm{v}^i)+\beta(\vtheta-\bm{v}_{\mathrm{anc}})$ with the reference anchor $\bm{v}_{\mathrm{anc}}=\vref$ at unit strength $\beta=1$ and reward weight $r^i=r^i_{\mathrm{ram}}$. Thus, RAM is update-equivalent to on-policy RVM ($\piold=\pitheta$) with the frozen reference as its anchor.

\paragraph{Connection to DiffusionNFT.} 
The positive and negative velocity directions are defined as
\begin{align*}
    \bm{v}^{\pm}_\theta (\bm{x}_t,t)
    =
    \bm{v}_{\mathrm{anc}}(\bm{x}_t,t)
    \pm
    \bar\beta(\vtheta(\bm{x}_t,t)-\bm{v}_{\mathrm{anc}}(\bm{x}_t,t)),
\end{align*}
where $\bm{v}_{\mathrm{anc}}$ is an EMA-updated velocity obtained from the current policy, and $\bar\beta$ controls the guidance strength. Then, the DiffusionNFT loss can then be written as\footnote{The factor $1/\bar\beta$ is a constant rescaling of the original DiffusionNFT \citep{zheng2025diffusionnft} objective.}: 
\begin{align}
\cL_{\mathrm{nft}}
= \tfrac{1}{2\bar\beta}
\mathbb{E}_{\bm{x}^{1:G}_0\sim\piold}
\left[
% \frac{1}{G}
% \sum_{i=1}^{G}
r^i_{\mathrm{nft}}
\|\bm{v}^{+}_\theta(\bm{x}_t^i,t)-\bm{v}^i\|_2^2
+
(1-r^i_{\mathrm{nft}})
\| \bm{v}^{-}_\theta(\bm{x}_t^i,t)-\bm{v}^i\|_2^2
\right].
\tag{NFT}
\label{eq:nft}
\end{align}
The reward $r^i_\mathrm{nft}\in[0,1]$ controls whether the positive branch $\bm{v}_{\theta}^+$ or the negative branch $\bm{v}_{\theta}^-$ is encouraged to match the fixed target velocity $\bm{v}^i$. 
By reorganizing its two squared errors into the RVM form, we obtain
\begin{align}
\cL_{\mathrm{nft}} = \tfrac{r^i_{\mathrm{rvm}}}{2}\,\big\|\vtheta-\bm{v}^i\big\|_2^2 + \tfrac{\bar\beta-r^i_{\mathrm{rvm}}}{2}\,\big\|\vtheta-\bm{v}_{\mathrm{anc}}\big\|_2^2 + \mathrm{const}, \qquad r^i_{\mathrm{rvm}} := 2r^i_{\mathrm{nft}}-1.
\label{eq:nft-rvm-main}
\end{align}
Thus, \cref{eq:nft} is exactly \cref{eq:rvm}, up to constants when the anchor is chosen as the EMA velocity $\bm{v}_{\mathrm{anc}}$ and the anchor strength is set to $\beta(r^i_{\mathrm{rvm}})=\bar\beta-r^i_{\mathrm{rvm}}$. 
We collect both connections in the following theorem, proved in \cref{subapp:connections}.

\graybox{%
\vspace{0.1in}
\begin{theorem}[Gradient-level Unification of Velocity-based Reward Objectives]\label{thm:unify}
Fix a group of samples $\{\bm{x}^i_0\}^G_{i=1}\sim \piold$ and omit the shared argument $(\bm{x}_t^i,t)$ on all velocity fields. Then, RAM and DiffusionNFT match RVM under specific choices of the sampling policy $\piold$, anchor velocity $\bm{v}_\mathrm{anc}$, reward $r^i_\mathrm{rvm}$, and anchor strength $\beta(r)$:
\begin{itemize}[leftmargin=*]
\item \textbf{RAM is update-equivalent to RVM.} With sampling policy $\piold=\pitheta$, anchor $\bm{v}_{\mathrm{anc}}=\vref$, strength $\beta\equiv 1$, and reward $r^i_\mathrm{rvm}=r^i_{\mathrm{ram}}$,
\begin{align*}
    \nabla_{\vtheta}\cL_{\mathrm{ram}} = \nabla_{\vtheta}\cL_{\mathrm{rvm}}.
\end{align*}
\item \textbf{DiffusionNFT is a special case of RVM.} With sampling policy $\piold$, the EMA anchor $\bm{v}_{\mathrm{anc}}$, reward weight $r^i_\mathrm{rvm} = 2r^i_{\mathrm{nft}}-1$, and the reward dependent anchor strength $\beta(r^i_\mathrm{rvm}) = \bar\beta - r^i_\mathrm{rvm}$,
\begin{align*}
\cL_{\mathrm{nft}} = \cL_{\mathrm{rvm}} + \mathrm{const}, \qquad \nabla_{\vtheta} \cL_{\mathrm{nft}} = \nabla_{\vtheta} \cL_{\mathrm{rvm}}.
\end{align*}
\end{itemize}
\end{theorem}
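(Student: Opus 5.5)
The proof works pointwise. For each fixed sample index $i$ and each fixed draw of $(t,\bm{\epsilon}^i)$, I will establish the identity between integrands. Both losses are expectations, over the same distribution, of per-sample terms that depend on $\theta$ only through the vector $\vtheta=\vtheta(\bm{x}_t^i,t)$. A pointwise identity of integrands, or of their gradients with respect to $\vtheta$, therefore passes through the expectation and then through the chain rule to $\nabla_\theta$. The one subtlety is the sampling distribution. In RAM the samples are drawn from $\pitheta$ itself, so I will state that, as usual for these surrogate losses, the gradient is taken with the sampling distribution held fixed. This is exactly the on-policy identification $\piold=\pitheta$ in the statement: differentiation does not act on the law of $\bm{x}_0^i$.

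\textbf{RAM.} I will differentiate the integrand of \cref{eq:ram} with respect to $\vtheta$, treating everything inside $\operatorname{sg}[\cdot]$ as a constant. The squared norm $\tfrac12\|\vtheta-\bm{c}\|_2^2$ with $\bm{c}=\operatorname{sg}[\vref+r^i_{\mathrm{ram}}(\bm{v}^i-\vtheta)]$ has gradient $\vtheta-\bm{c}$. Only after differentiating do I substitute the numerical value of $\bm{c}$, which gives \cref{eq:ram-grad-main}, namely $r^i_{\mathrm{ram}}(\vtheta-\bm{v}^i)+(\vtheta-\vref)$.

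On the RVM side, the integrand of \cref{eq:rvm} is a sum of two quadratics with constant coefficients $r^i$ and $\beta\equiv1$. Its gradient is $r^i(\vtheta-\bm{v}^i)+(\vtheta-\bm{v}_{\mathrm{anc}})$. Setting $\bm{v}_{\mathrm{anc}}=\vref$ and $r^i=r^i_{\mathrm{ram}}$ gives equality. The losses themselves are not equal, since RAM's value involves the stop-gradient, so only update-equivalence is claimed.

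\textbf{DiffusionNFT.} Here I will prove the stronger loss-level identity \cref{eq:nft-rvm-main} by direct expansion. Write $\bm{a}=\vtheta-\bm{v}_{\mathrm{anc}}$ and $\bm{b}=\bm{v}_{\mathrm{anc}}-\bm{v}^i$, so that $\bm{v}^\pm_\theta-\bm{v}^i=\bm{b}\pm\bar\beta\bm{a}$. Expanding the NFT integrand gives
\begin{align*}
\tfrac{1}{2\bar\beta}\big[r\|\bm{b}+\bar\beta\bm{a}\|^2+(1-r)\|\bm{b}-\bar\beta\bm{a}\|^2\big]
=\tfrac{1}{2\bar\beta}\|\bm{b}\|^2+\tfrac{\bar\beta}{2}\|\bm{a}\|^2+(2r-1)\langle\bm{a},\bm{b}\rangle ,
\end{align*}
with $r=r^i_{\mathrm{nft}}$. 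Next I expand the RVM integrand with $\rho=2r-1$ and $\beta(\rho)=\bar\beta-\rho$, using $\vtheta-\bm{v}^i=\bm{a}+\bm{b}$:
\begin{align*}
\tfrac{\rho}{2}\|\bm{a}+\bm{b}\|^2+\tfrac{\bar\beta-\rho}{2}\|\bm{a}\|^2=\tfrac{\rho}{2}\|\bm{b}\|^2+\rho\langle\bm{a},\bm{b}\rangle+\tfrac{\bar\beta}{2}\|\bm{a}\|^2 .
\end{align*}
The two expressions differ by $\big(\tfrac{1}{2\bar\beta}-\tfrac{\rho}{2}\big)\|\bm{v}_{\mathrm{anc}}-\bm{v}^i\|^2$. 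This term does not involve $\vtheta$, because the EMA anchor carries no gradient. It is therefore the "const" of the statement, and equality of gradients follows immediately.

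The main obstacle is not algebraic but interpretive. One has to make precise that "const" means independent of $\theta$ while still possibly depending on the sample, and that the EMA anchor and the RAM target are treated as non-differentiated quantities. I will state these conventions explicitly at the outset, since both claimed equalities fail without them.
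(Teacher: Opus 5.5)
Your proposal is correct and follows the paper's proof. The RAM part is identical: differentiate with the detached target held fixed, then substitute its value. For DiffusionNFT you verify the same identity by expanding both losses, while the paper instead differentiates first and rearranges the gradient, then infers equality of losses up to a constant from the gradients agreeing at every $\vtheta$. Your route has the small advantage of producing that constant explicitly, $\bigl(\tfrac{1}{2\bar\beta}-\tfrac{r^i_{\mathrm{rvm}}}{2}\bigr)\|\bm{v}_{\mathrm{anc}}-\bm{v}^i\|_2^2$.
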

}

Because RAM and DiffusionNFT share the same update structure to RVM, what distinguishes these methods is not the basic velocity-regression direction but design choices like the anchor velocity $\bm{v}_\mathrm{anc}$ and the reward.

\begin{remark}[On-policy fixed point]
Suppose the update is purely on-policy, i.e., $\piold = \pitheta$, and consider the population limit with $\bm{x}_0\sim\pitheta$, $\bm{x}_t=(1-t)\bm{x}_0+t\bm{\epsilon}$, and $\bm{v}=\bm{\epsilon}-\bm{x}_0$. 
Then, the fixed point condition is
\begin{align*}
\bm{0}=\mathbb{E}\big[\beta(r(\bm{x}_0))\,\big(\vtheta(\bm{x}_t, t)-\bm{v}_{\mathrm{anc}}(\bm{x}_t, t)\big)+r(\bm{x}_0)\,(\vtheta(\bm{x}_t, t)-\bm{v})\,\big|\,\bm{x}_t\big].
\end{align*}
The learned field $\vtheta$ is not automatically guaranteed to equal the conditional velocity induced by its own endpoint distribution $\pitheta$.
\end{remark}

\paragraph{Connection to ELBO-based policy gradients.}
The same reduction reaches the policy-gradient side. \citet{choi2026rethinkingdesignspacereinforcement} weight the log-ratio $\log\rhotheta(\bm{x}_0)$ of the final sample, which is intractable and estimated from a single noisy state through the \cref{eq:elbo}. Substituting that estimator, their exact policy gradient (EPG) reduces as follows:
\begin{align}
\mathcal{L}_{\mathrm{epg}}
=
\mathbb{E}_{\bm{x}^{1:G}_0 \sim \piold}w(t) 
\big[ \operatorname{sg}[\rhotheta(\bm{x}^{i}_0)]\,r^i_{\mathrm{epg}} \Vert \vtheta(\bm{x}^i_t, t) - \bm{v}^i \Vert^2 + \beta \Vert \vtheta(\bm{x}^i_t, t) - \vref(\bm{x}^i_t, t)  \Vert^2
\big]. \label{eq:epg2}
\end{align}
ELBO-based approaches additionally require to apply the importance weight $\operatorname{sg}[\rhotheta(\bm{x}^{i}_0)]$, whose evaluation itself relies on an ELBO approximation to the intractable sample likelihood, together with the time-dependent weighting $w(t)$. Dropping both $\operatorname{sg}[\rhotheta(\bm{x}^{i}_0)]$ and $w(t)$ reduces the objective exactly to \cref{eq:rvm}, eliminating the need for likelihood-ratio estimation.  

%%%%%%%%%%%%%%%%%%%%%%%%%%%%%%%%%%%%%%%%%%%%%%%%%%%%%%%%%%%%%%%%%%%%%

\begin{figure*}[t]
    \centering
    \includegraphics[width=0.92\linewidth]{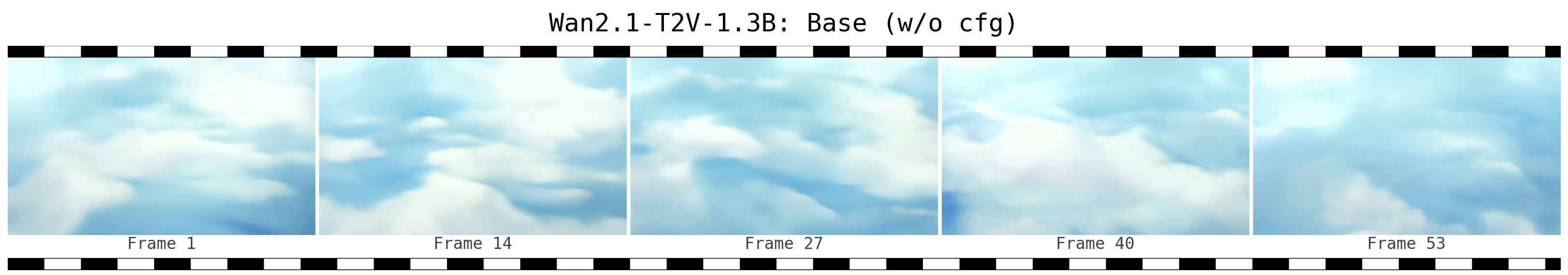}\\[0pt]
    \includegraphics[width=0.92\linewidth]{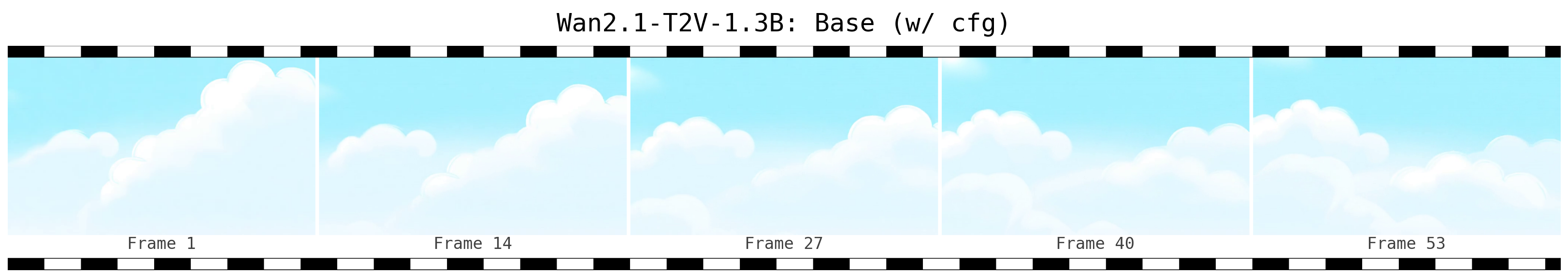}\\[0pt]
    \includegraphics[width=0.92\linewidth]{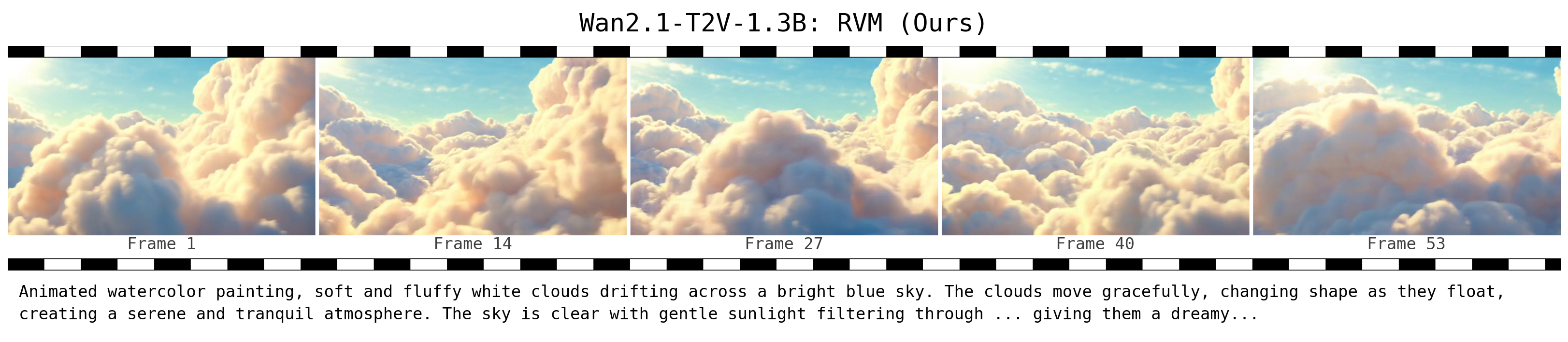}\\[3pt]
    \includegraphics[width=0.92\linewidth]{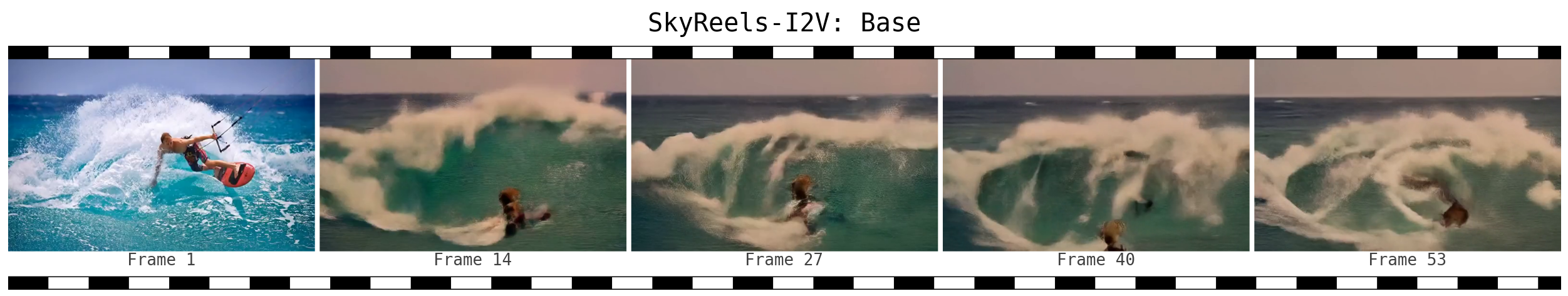}\\[0pt]
    \includegraphics[width=0.92\linewidth]{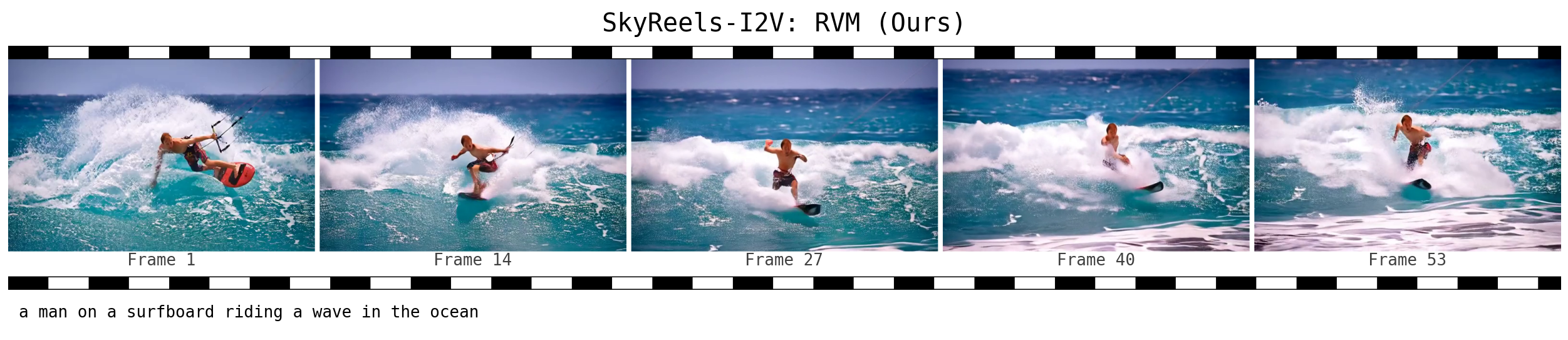}\\[-7pt]
    \caption{\textbf{Qualitative comparison between the base model and RVM} on Wan2.1-T2V-1.3B and SkyReels-I2V-V1. Additional examples are provided in \cref{app:qualitative}.}
    % \vspace{-10pt}
    \label{fig:all_comp}
\end{figure*}

\section{Experiments}
\label{sec:exp}
Our experiments address the following two questions:
% We try to answer the following two questions in our experiments: 
\textit{Can reward-based diffusion fine-tuning be simplified to velocity matching without sacrificing performance? And if so, what design choices actually matter?} We focus primarily on image and video diffusion models, where each rewarded sample requires an expensive model rollout and training efficiency is particularly desirable. Our main findings are:
\begin{itemize}[leftmargin=*]
\item In \cref{subsec:main_results}, we compare trajectory-based, ELBO-based, and other recent approaches against our RVM formulation. Our velocity-based approach is \textbf{comparable or better in both quality and efficiency}, achieving comparable or higher reward and benchmark scores with fewer GPU hours. More importantly, we demonstrate that the particular velocity-matching loss forms matter little: RVM, RAM, and DiffusionNFT perform comparably.
\item In \cref{subsec:ablation}, we isolate the two design choices that remain, (i) \textit{the reward and how it is shaped} and (ii) \textit{the anchoring velocity}, and find that \textbf{the reward function itself is the critical factor}. Its choice and the way it is shaped into the per-sample weight account for most of the gain. The choice of anchoring velocity matters in a different way, governing the training dynamics rather than the metrics. A well-chosen anchor keeps the updates faithful and help to gain semantic property.
\end{itemize}

\subsection{Experimental Setup}

\begin{table*}[t]
\centering
\caption{\textbf{Evaluation results on Wan2.1-T2V-1.3B.} We report reward scores including VideoAlign, HPSv3 and VBench on VBench dataset. \texttt{Dyn.\ Degree} is the VBench dynamic-degree dimension, one of the seven quality dimensions already averaged into \texttt{Quality}, reported on its own because it is the only one that measures how much the video moves. \textbf{Bold} marks the best entry per column. $^\dagger$cited numbers from \citet{li2026rethinkingrewardsignalsvideo}.}
\label{tab:vbench_t2v}
\scriptsize
\setlength{\tabcolsep}{5pt}
\renewcommand{\arraystretch}{1}
\begin{tabular*}{\textwidth}{@{\extracolsep{\fill}}l cccc cccc}
\toprule
& \multicolumn{4}{c}{Reward} & \multicolumn{4}{c}{VBench} \\
\cmidrule(lr){2-5}\cmidrule(lr){6-9}
Method & TA & VQ & MQ & HPSv3 & Dyn. Degree & Quality & Semantic & Overall \\
\midrule
Wan2.1-1.3B (w/o CFG) & $2.53$ & $2.33$ & $0.35$ & $1.85$ & $54.17$ & $78.59$ & $65.73$ & $76.02$ \\
Wan2.1-1.3B (w/ CFG) & $\mathbf{5.50}$ & $4.08$ & $0.97$ & $7.85$ & $65.28$ & $83.72$ & $\mathbf{80.64}$ & $83.10$ \\
\midrule
\multicolumn{9}{l}{\emph{Trajectory-based}} \\
FlowGRPO & $2.48$ & $2.24$ & $0.27$ & $1.62$ & $55.56$ & $78.67$ & $64.87$ & $75.91$ \\
DanceGRPO$^\dagger$ & $-1.17$ & $3.41$ & $0.30$ & -- & $58.33$ & $82.81$ & $66.06$ & $79.46$ \\
TaRoS$^\dagger$ & $-0.71$ & $3.50$ & $0.31$ & -- & $57.67$ & $83.23$ & $66.26$ & $79.84$ \\
TaRoS-72B$^\dagger$ & -- & -- & -- & -- & $58.33$ & $83.66$ & $68.15$ & $80.56$ \\
\midrule
\multicolumn{9}{l}{\emph{Velocity-based}} \\
DiffusionNFT & $5.03$ & $5.04$ & $1.63$ & $9.70$ & $63.89$ & $84.37$ & $77.28$ & $82.95$ \\
RAM & $4.86$ & $4.23$ & $1.14$ & $8.39$ & $61.11$ & $83.35$ & $76.33$ & $81.95$ \\
Ours (RVM) & $5.12$ & $\mathbf{5.47}$ & $\mathbf{1.86}$ & $\mathbf{10.90}$ & $\mathbf{75.00}$ & $\mathbf{86.09}$ & $76.28$ & $\mathbf{84.13}$ \\
\bottomrule
\end{tabular*}
%\vspace{-1em}
\vspace{4pt}

\centering
\caption{\textbf{Evaluation results on SkyReels-I2V.} We report reward scores including VideoAlign, HPSv3 and VBench-I2V on VBench-I2V dataset. \texttt{Dyn.\ Degree} is the VBench dynamic-degree dimension, one of the seven quality dimensions already averaged into \texttt{Quality}. \textbf{Bold} marks the best entry per column. DiffusionNFT and RAM are trained with the same reward mixture as ours.
}
\label{tab:vbench_i2v}
\scriptsize
\setlength{\tabcolsep}{5pt}
\renewcommand{\arraystretch}{1}
\begin{tabular*}{\textwidth}{@{\extracolsep{\fill}}l cccc cccc}
\toprule
& \multicolumn{4}{c}{Reward} & \multicolumn{4}{c}{VBench-I2V} \\
\cmidrule(lr){2-5}\cmidrule(lr){6-9}
Method & TA & VQ & MQ & HPSv3 & Dyn. Degree & I2V & Quality & Overall \\
\midrule
{SkyReels-I2V-V1} & $2.45$ & $2.30$ & $0.06$ & $3.18$ & $64.63$ & $87.79$ & $75.57$ & $81.68$ \\
\midrule
\multicolumn{9}{l}{\emph{Trajectory-based}} \\
FlowGRPO & $1.67$ & $1.61$ & $-0.47$ & $-1.85$ & $41.46$ & $86.00$ & $69.93$ & $77.97$ \\
\midrule
\multicolumn{9}{l}{\emph{Velocity-based}} \\
DiffusionNFT & $2.65$ & $\mathbf{2.59}$ & $\mathbf{0.61}$ & $6.43$ & $47.56$ & $93.18$ & $79.15$ & $86.16$ \\
RAM & $2.68$ & $2.56$ & $0.49$ & $\mathbf{6.57}$ & $51.22$ & $\mathbf{93.49}$ & $79.72$ & $\mathbf{86.61}$ \\
% Ours (RVM) & $2.52$ & $\mathbf{2.85}$ & $\mathbf{0.88}$ & $\mathbf{7.32}$ & 0.41 & \textbf{95.33} & 77.57 & \textbf{86.45} \\
Ours (RVM) & $\mathbf{2.70}$ & $2.45$ & $0.41$ & $6.03$ & $\mathbf{72.36}$ & $92.26$ & $\mathbf{80.28}$ & $86.27$ \\
\bottomrule
\end{tabular*}
%\vspace{-1em}
\vspace{4pt}

\centering
\caption{{\textbf{Evaluation results on SD3.5-M.} We report the OCR text-to-image fine-tuning task.
% , evaluated following \citet{choi2026rethinkingdesignspacereinforcement}. 
% \colorbox{gray!12}{gray} cells mark the rewards included in training. 
\textbf{Bold} marks the best result per column. $^\dagger$cited numbers from \citet{choi2026rethinkingdesignspacereinforcement}.}
The first four rewards, i.e. OCR, PickScore, ClipScore, HPSv2.1, are used in training.
}
\label{tab:t2i}
\scriptsize
\setlength{\tabcolsep}{5pt}
\renewcommand{\arraystretch}{1}
\begin{tabular*}{\textwidth}{@{\extracolsep{\fill}}l cccccc}
\toprule
Method & OCR & PickScore & ClipScore & HPSv2.1 & Aesthetic & ImgRwd \\
\midrule
{FlowGRPO$^\dagger$} & $0.92$ & $22.41$ & $0.290$ & $0.280$ & $5.32$ & $0.95$ \\
{AWM$^\dagger$} & $0.80$ & $20.70$ & $0.301$ & $0.206$ & $4.53$ & $-0.13$ \\
{DiffusionNFT$^\dagger$} & $0.93$ & $22.09$ & $0.307$ & $0.277$ & $5.17$ & $0.97$ \\
{PEPG$^\dagger$} & $0.94$ & $22.93$ & $\mathbf{0.315}$ & $0.302$ & $5.33$ & $\mathbf{1.34}$ \\
{Ours (RVM)} & $\mathbf{0.95}$ & $\mathbf{23.03}$ & $0.312$ & $\mathbf{0.310}$ & $\mathbf{5.56}$ & $\mathbf{1.34}$ \\
\bottomrule
\end{tabular*}
\vspace{-15pt}
\end{table*}

\begin{figure*}[t]
    \centering
    \begin{minipage}[t]{0.48\linewidth}
        \vspace{0pt}
        \centering
        \includegraphics[width=\linewidth]{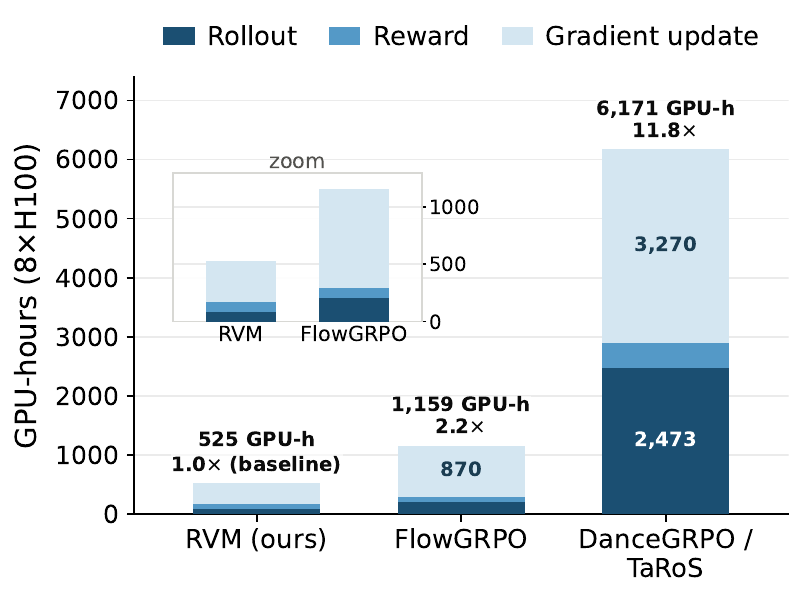}
        \vspace{-20pt}
        \caption{\textbf{GPU-hour cost comparison on Wan2.1-T2V-1.3B.} Training time is decomposed into rollout, reward evaluation, and gradient update.
        % Note that despite being the cheapest, RVM attains the highest VBench Overall ($84.13$ vs.\ $75.91$ for FlowGRPO).
        }
        \label{fig:train_cost}
    \end{minipage}\hfill
    \begin{minipage}[t]{0.48\linewidth}
        \vspace{0pt}
        \centering
        \includegraphics[width=\linewidth]{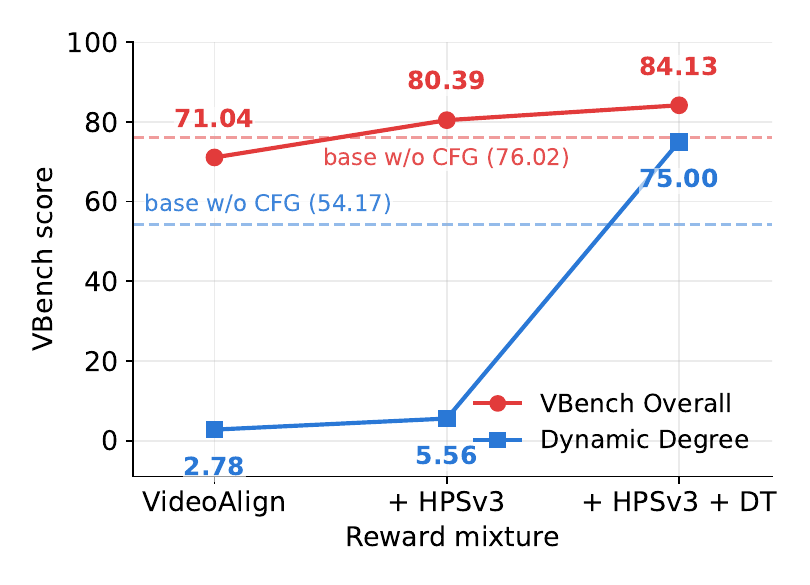}
        \caption{\textbf{Ablation of reward design on Wan2.1-T2V-1.3B.}
        {VBench Overall and dynamic degree for the three reward mixtures.}}
        \label{fig:reward_ablation}
    \end{minipage}
\end{figure*}

\paragraph{Base models.}
We evaluate reward fine-tuning across text-to-image (T2I), text-to-video (T2V), and image-to-video (I2V) models using Stable Diffusion 3.5~\citep{esser2024scaling}, Wan2.1-T2V-1.3B~\citep{wan2025wan}, and SkyReels-I2V~\citep{chen2025skyreelsv2infinitelengthfilmgenerative}, respectively. All models are fine-tuned with LoRA~\citep{hu2022lora} using publicly available prompt sets and reward models. Importantly, we separate the rewards used for training from the metrics used for evaluation, allowing us to assess whether improvements generalize beyond the optimized reward signals.

\paragraph{Default configuration.}
Unless otherwise stated, we train with \cref{eq:rvm} following implementation of \citet{choi2026rethinkingdesignspacereinforcement}. For video models, we use $\beta \equiv c$ where $c$ is a constant, and sample with a deterministic DPM-Solver-2 ODE~\citep{lu2022dpmsolver}, using neither CFG nor policy-ratio clipping. The loss is computed directly as velocity regression, and the anchor strength is $\beta=0$, so each update reduces to the reward-weighted velocity regression alone. At evaluation, we generate one video per prompt. Full hyperparameters are listed in \cref{app:impl}.

\paragraph{Evaluation.}
For T2I diffusion models, we follow the evaluation of \citet{liu2025flow, zheng2025diffusionnft, choi2026rethinkingdesignspacereinforcement}.
For video model, we train and evaluate along two axes. First, we report the VideoAlign and HPSv3 reward scores, the same rewards used during training. Second, we report the standardized benchmarks VBench~\citep{huang2024vbench} for the T2V model and VBench-I2V~\citep{huang2024vbenchpp} for the I2V models. The Vbench score comprises two parts: Qualtiy and Semantic score. The Overall Vbench score is computed by a weighted sum $0.8\cdot\text{Quality}+0.2\cdot\text{Semantic}$ for VBench and the average of the two scores for VBench-I2V.

\paragraph{Baselines.}
We compare against three groups, namely \textbf{(i)} the pretrained base model, evaluated without CFG and with CFG at scale $5$ using the official negative prompt; \textbf{(ii)} trajectory-based RL methods, FlowGRPO~\citep{liu2025flow}, DanceGRPO~\citep{xue2025dancegrpo}, and TaRoS~\citep{li2026rethinkingrewardsignalsvideo}; and \textbf{(iii)} trajectory-independent velocity-based methods, DiffusionNFT~\citep{zheng2025diffusionnft}, RAM~\citep{bergmeister2026reinforceadjointmatchingscaling}, and our RVM. We cite published DanceGRPO and TaRoS numbers, obtained under a different protocol (CFG, a 50-step SDE, five videos per prompt) and therefore uncontrolled.

\subsection{Main Results}
\label{subsec:main_results}

\paragraph{Velocity matching outperforms trajectory-based fine-tuning.}
Across both T2V and I2V settings, velocity-matching objectives consistently outperform trajectory-based policy-gradient methods. On Wan2.1-T2V-1.3B, RVM achieves the best VBench Overall score of $84.13$, improving over FlowGRPO ($75.91$), DanceGRPO ($79.46$), and TaRoS-72B ($80.56$). The gain is especially pronounced in motion: RVM reaches a dynamic degree of $75.00$, compared to $55$--$58$ for trajectory-based methods. A similar trend appears on SkyReels-I2V, where FlowGRPO degrades the base model from $81.68$ to $77.97$ Overall, while RVM reaches $86.27$. These results suggest that reward-based video diffusion fine-tuning does not require trajectory-level policy-gradient updates to obtain strong performance.

\paragraph{The specific velocity-matching objective matters little.}
On both video diffusion models, DiffusionNFT, RAM, and RVM achieve similar performance across both reward metrics and VBench scores, with differences limited to a few points. These performance gaps between velocity-matching methods are small compared to their improvement over the CFG-free base model. 
% The same holds on SkyReels-I2V, where the three objectives trained under the same reward mixture lie within $0.45$ VBench Overall of one another ($86.16$--$86.61$, \cref{tab:vbench_i2v}) while all improving the base model by more than $4.4$ points. 
This agrees with the analysis in \cref{sec:newloss} that the three objectives share the same anchored velocity-matching structure, suggesting that the particular loss variant is not the main driver of performance. Furthermore, on the T2I OCR, RVM performs competitively with ELBO-based methods. As shown in \cref{tab:t2i}, it achieves the best score on nearly every metric and performs on par with the ELBO-based baseline (PEPG).

\paragraph{Velocity-matching approaches are efficient.}
Velocity matching substantially reduces training cost because each update uses a single noised state rather than a stored SDE trajectory. As shown in \cref{fig:train_cost}, our RVM recipe costs $525$ GPU-hours for a full Wan2.1-T2V-1.3B run on $8\times$H100, compared to $1{,}159$ GPU-hours for FlowGRPO and $6{,}171$ GPU-hours for DanceGRPO/TaRoS. This corresponds to a $2.2\times$ and $11.8\times$ reduction, respectively. The gap comes from two sources: trajectory-based methods require more function evaluations per rollout, and they are trained for more iterations under their standard protocols. In particular, RVM uses a $16$-step ODE without CFG, FlowGRPO uses a $40$-step SDE without CFG, and DanceGRPO/TaRoS use a $50$-step SDE with CFG, corresponding to $100$ function evaluations per video.

\subsection{Further Analysis}\label{subsec:ablation}
In this section, we ablate the remaining design choices: the reward formulation and the anchor regularization term.

\paragraph{Ablation on reward settings for video diffusion models.}
We construct reward signals from publicly available models and combine them through a weighted sum. We study three reward settings. 
The first uses the VideoAlign~\citep{liu2025videoalign} rewards: text alignment (TA), motion quality (MQ), and visual quality (VQ). 
The second replaces VQ with HPSv3~\citep{ma2025hpsv3}, a human-preference-based quality reward. 
However, we find that VideoAlign and HPSv3 alone are not sufficient, as they can still favor visually clean but nearly static videos. 
To address this, we further add a dynamic-tracking (DT) reward, built on RAFT optical flow~\citep{teed2020raft}, which explicitly encourages meaningful motion.
\begin{itemize}[leftmargin=*,nosep]
\item \textbf{VideoAlign}: text alignment (TA), motion quality (MQ), and visual quality (VQ).
\item \textbf{VideoAlign + HPSv3}: TA, MQ, and HPSv3 (human preference).
\item \textbf{VideoAlign + HPSv3 + DT}: TA, MQ, HPSv3, and dynamical tracking (DT).
\end{itemize}
The detailed reward design and weights are discussed in \cref{app:videoreward}. Unless otherwise stated, we used the last mixture as our default reward. As shown in \cref{fig:reward_ablation}, \textbf{the DT reward is crucial for improving actual video quality}. The DT reward is the main factor behind the improvement in motion. Without DT, dynamic degree remains very low, even after adding HPSv3. Adding DT raises dynamic degree from $5.56$ to $75.00$ and improves VBench Overall from $80.39$ to $84.13$, showing that explicit motion reward is crucial for effective video fine-tuning.

\paragraph{Ablation on the anchor velocity.}
{We ablate the anchor velocity in the anchor term of \cref{eq:rvm} on Wan2.1-T2V-1.3B, while keeping all other settings the same. We compare three settings: no anchor ($\beta=0$), the conditional CFG-free reference velocity $\vref (x_t, t \mid c)$, and the reference velocity with CFG weight of $5.0$, using $\beta=10^{-3}$ for both anchored settings. As shown in \cref{fig:anchor_ablation}, the unanchored setting performs best overall, achieving the highest VBench Total and reward-model scores. However, anchoring to the CFG-guided reference slightly improves VBench Semantic ($76.29 \to 78.48$), suggesting that the CFG-guided anchor can transfer some prompt-adherence benefits to the CFG-free policy. All trained variants still substantially improve over the CFG-free base.}

\begin{figure*}[t]
    \centering
    % \vspace{-16pt}
    \begin{minipage}[t]{0.48\linewidth}
        \vspace{0pt}
        \centering
        \includegraphics[height=34mm]{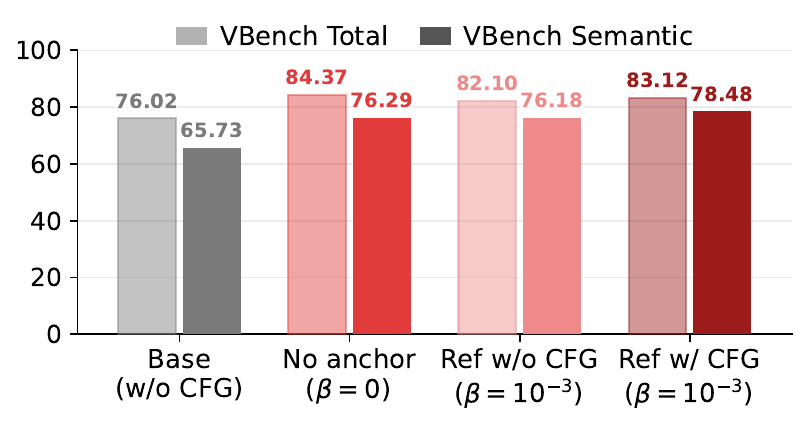}
        
        \caption{\textbf{Ablation of the anchor velocity on Wan2.1-T2V-1.3B.} 
        % Only the anchor term of \cref{eq:rvm} varies. 
        Anchoring to the reference under CFG improves Semantic; the unanchored recipe is best on Total.}
        \label{fig:anchor_ablation}
    \end{minipage}\hfill
    \begin{minipage}[t]{0.48\linewidth}
        \vspace{0pt}
        \centering
        \includegraphics[height=34mm]{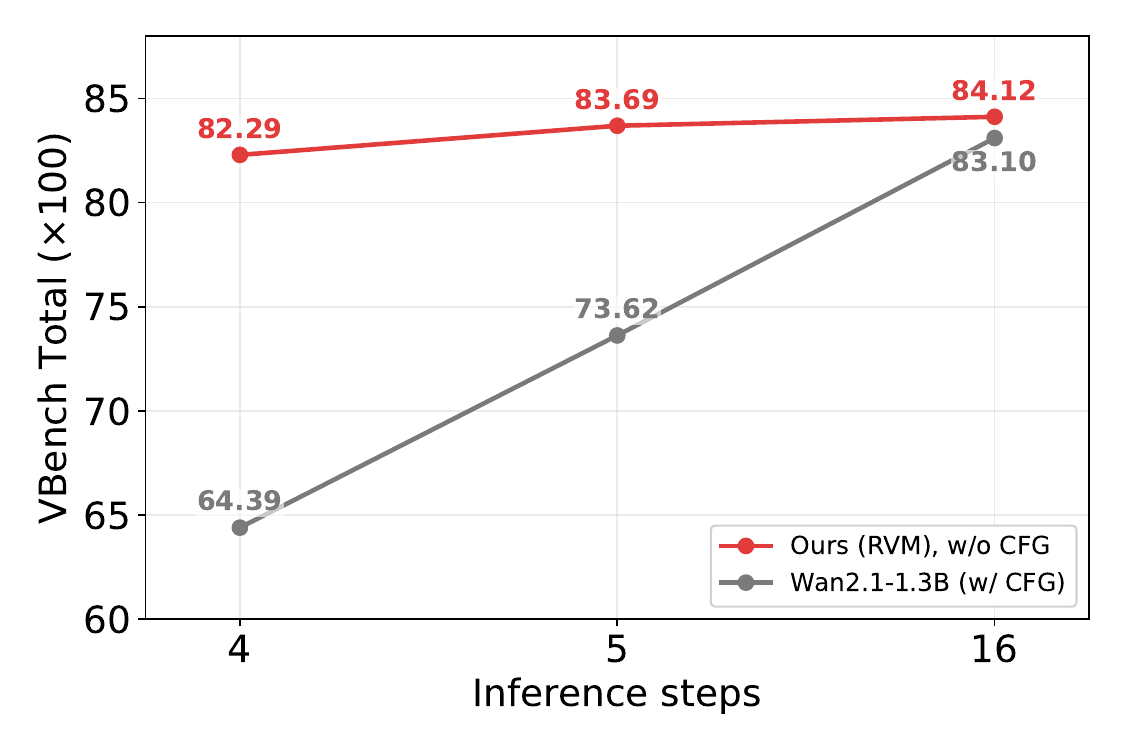}
        \caption{{\textbf{Few-step inference on Wan2.1-T2V-1.3B.} VBench Total against the number of inference steps, under the same timestep schedules.}}
        \label{fig:step_ablation}
    \end{minipage}
\end{figure*}

\paragraph{Few-step generation.}
{We also test whether fine-tuning improves few-step sampling robustness. Starting from the trained $16$-step noise grid, we sub-sample it to $5$ and $4$ steps while keeping the same low-noise tail, and evaluate the same checkpoints under each schedule using the official 16-dimensional VBench aggregation with one video per prompt. As shown in \cref{fig:step_ablation}, the base model degrades sharply as the number of steps decreases, even with CFG at guidance scale $5$: VBench Total drops from $83.10$ at $16$ steps to $73.62$ at $5$ steps and $64.39$ at $4$ steps. In contrast, CFG-free RVM is much more stable, losing only $0.43$ points at $5$ steps and $1.83$ points at $4$ steps; notably, its $4$-step samples remain comparable to the $16$-step base model with CFG. This suggests that RVM does more than improve reward scores: by directly matching velocities toward rewarded clean samples $\bm{x}_0$, it learns a velocity field with less curvature towards high-reward samples, making the sampler robust even under aggressive step reduction.}

\section{Conclusion}
In this work, we developed reward-based velocity matching (RVM), a simple trajectory-free loss for reward fine-tuning diffusion models, and showed that it unifies recent velocity-based objectives such as RAM and DiffusionNFT at the level of velocity-field updates. Our experiments verify that these velocity-matching losses are scalable and effective even for video fine-tuning, where each rollout is expensive. More importantly, we find that the specific loss functional matters less than the reward design: carefully shaped rewards are crucial for translating reward optimization into actual improvements in motion and visual quality. A natural future direction is to extend this framework to autoregressive video generation.

% \section*{Acknowledgments}
% The authors are grateful for partial supports from NSF Grants ECCS-1942523, DMS-2206576, 2409016, 2450378, and AFOSR Grant FA9550-25-1-0169. JC is supported by National Research Foundation of Korea (NRF) grants (RS-2024-00342044). YZ and MT gratefully acknowledge the partial supports by NSF Grant DMS-2513699 (YZ \& MT), DOE Grants NA0004261 (MT), SC0026274 (YZ \& MT), Richard Duke Fellowship (YZ \& MT), and Simons Institute for the Theory of Computing at UC Berkeley (MT). WG acknowledges Georgia Tech ARC-ACO Fellowship for partial support.

% \clearpage
\section*{Impact Statement}
This paper presents work aimed at advancing the field of generative models. It proposes a simple approach to reward fine-tuning of video diffusion models. For example, it may be abused to create various harmful and offensive content. We strongly caution the community against such use cases.

% \clearpage
\bibliographystyle{iclr2027_conference}
\bibliography{reference.bib}

%%%%%%%%%%%%%%%%%%%%%%%%%%%%%%%%%%%%%%%%%%%%%%%%%%%%%%%%%%%%%%%%%%%%%%%%%%%%%%%
%%%%%%%%%%%%%%%%%%%%%%%%%%%%%%%%%%%%%%%%%%%%%%%%%%%%%%%%%%%%%%%%%%%%%%%%%%%%%%%
% APPENDIX
%%%%%%%%%%%%%%%%%%%%%%%%%%%%%%%%%%%%%%%%%%%%%%%%%%%%%%%%%%%%%%%%%%%%%%%%%%%%%%%
%%%%%%%%%%%%%%%%%%%%%%%%%%%%%%%%%%%%%%%%%%%%%%%%%%%%%%%%%%%%%%%%%%%%%%%%%%%%%%%
\newpage
\appendix
\crefalias{section}{appendix}
\crefalias{subsection}{appendix}
\crefalias{subsubsection}{appendix}
\onecolumn

\section{Related Works}

\paragraph{RL fine-tuning for language models}
RL has been extensively studied as a principled approach for improving language-model capabilities and alignment \citep{ouyang2022training, guo2025deepseek}. Popular methods include PPO \citep{schulman2017proximal}, GRPO \citep{shao2024deepseekmath}, and their variants \citep{liu2025understanding, yu2025dapo, ahmadian2024back}. Recent algorithmic developments focus on designing more stable policy objectives \citep{zhao2025geometric, zheng2025group, gao2025soft, chen2025minimax, team2025kimi} and understanding their estimation and optimization behavior \citep{zhang2025design, zheng2025stabilizing, liu-li-2025-rl-collapse}. QUATRO \citep{lee2026quatro} is closely related to our policy-gradient formulation, as it studies proximal sequence-level policy optimization for likelihood-based LLMs. However, directly transferring these methods to diffusion and flow models is nontrivial because diffusion-model likelihoods and likelihood ratios are intractable.

\paragraph{Trajectory-based RL for diffusion and flow models}
RL has also been widely adopted to fine-tune diffusion and flow models toward human preferences or task-specific rewards \citep{fan2023dpok, black2023training, domingo2024adjoint}. A dominant line of work follows a trajectory-based view, where the denoising or flow trajectory is treated as the policy trajectory. FlowGRPO \citep{liu2025flow} adapts GRPO to diffusion models using trajectory-based likelihood estimation and achieves strong results in text-to-image generation. Subsequent works improve this recipe through better trajectory estimators, samplers, or rollout structures \citep{he2025tempflow, wang2025grpo, xue2025dancegrpo, wang2025pref, ye2025data, xue2025advantage}. Other examples include MixGRPO \citep{li2026mixgrpounlockingflowbasedgrpo}, BranchGRPO \citep{li2025branchgrpostableefficientgrpo}, and LeapAlign \citep{liang2026leapalignposttrainingflowmatching}, which modify the rollout or trajectory construction. Reward Score Matching \citep{lee2026rewardscorematchingunifying} unifies many of these trajectory-based methods as score-matching regressions toward a value-guided target, and can be viewed as a trajectory-level counterpart of the ELBO-based approaches (\cref{eq:epg}). While effective, these methods are often coupled to the sampling trajectory and can incur substantial memory and compute overhead, especially for video generation.

\paragraph{Simplified and trajectory-independent diffusion fine-tuning}
A separate line of work seeks simpler alternatives that avoid explicit trajectory-level likelihood estimation. DiffusionNFT \citep{zheng2025diffusionnft} adapts negative fine-tuning \citep{chen2025bridging} to diffusion and flow models by directly modifying velocity-regression losses. AWM \citep{xue2025advantage} and RAM \citep{bergmeister2026reinforceadjointmatchingscaling} similarly replace trajectory-based policy-gradient estimation with simpler velocity- or score-based objectives. Relatedly, DPPO \citep{ren2025diffusion} applies PPO-style updates to diffusion policies in continuous-control and robotic-manipulation settings, while Q-score matching \citep{psenka2024learning} learns diffusion policies from reward signals by matching score functions with Q-function gradients. \citet{choi2026rethinkingdesignspacereinforcement} shows that, for diffusion fine-tuning, simple objectives can perform competitively when the likelihood or velocity-based training loss is chosen carefully. Our work follows this direction, focusing on trajectory-independent velocity-based losses and analyzing when objective engineering is actually necessary. Concurrently, \citet{xu2026designingreinforcementlearning} present a unified path-space view in which reverse-trajectory and forward-matching methods arise from a common importance-sampling-based policy-gradient principle, attributing empirical gaps between method families to variance reduction rather than differing RL principles. Their unification operates on trajectory-space likelihood ratios, whereas RVM unifies velocity-based methods directly in the model's native velocity representation, avoiding likelihood estimation altogether.

\paragraph{RL fine-tuning for video diffusion models}
Recent works have begun to apply RL-style fine-tuning to video diffusion and autoregressive video generators. DanceGRPO \citep{xue2025dancegrpo} and FlowGRPO-based video extensions apply GRPO-style objectives to video generation, while KVPO \citep{zhang2026kvpoodenativegrpoautoregressive}, RAVEN \citep{lu2026ravenrealtimeautoregressivevideo}, TAGRPO \citep{wang2026tagrpoboostinggrpoimagetovideo}, SAGE-GRPO \citep{zheng2026manifoldawareexplorationreinforcementlearning}, AR-CoPO \citep{he2026arcopoalignautoregressivevideo}, V-GRPO \citep{tang2026vgrpoonlinereinforcementlearning}, and recent systematic video fine-tuning frameworks \citep{xue2026systematicposttrainframeworkvideo} explore different objectives, rollout strategies, and video-specific reward designs. Other works focus on reward construction, including TaRoS \citep{li2026rethinkingrewardsignalsvideo}, Adv-GRPO \citep{mao2025imagerewardreinforcementlearning}, Diffusion-DRF \citep{wang2026diffusiondrffreerichdifferentiable}, AlphaGRPO \citep{huang2026alphagrpounlockingselfreflectivemultimodal}, and PromptRL \citep{wang2026promptrlpromptmattersrl}. In contrast to recipes that combine clipping, CFG, trajectory-based losses, and reward engineering simultaneously, our work isolates the key components and shows that simple trajectory-independent velocity-based training, combined with careful reward design, is sufficient for efficient and stable video diffusion fine-tuning.

\section[Discussion of Loss Functionals]{Discussion of Loss Functionals}\label{sec:loss}

We collect here the loss functionals discussed in \cref{sec:bg-objectives} in a common notation and describe the reward weight and the per-loss implementation choices used in our runs. Throughout, $\bm{x}^{1:G}_0$ is a group of $G$ samples drawn for a shared prompt from each method's sampling policy, a general sampling $\piold$ for RVM, the current policy $\pitheta$ for RAM, and its exponential moving average $\piold$ for DiffusionNFT, while $\bm{x}_t^i\sim p_{t\mid 0}(\cdot\mid\bm{x}_0^i)$ is a single noisy state with velocity target $\bm{v}^i$, and $r^i$ is the reward weight of the $i$-th sample.

\paragraph{Reward weight}
Unless stated otherwise, the per-sample reward weight is the standardized group-relative reward
\begin{align*}
r^i_{\mathrm{grpo}} = \frac{R(\bm{x}_0^i) - \operatorname{mean}\big(R(\bm{x}_0^1),\dots,R(\bm{x}_0^G)\big)}{\operatorname{std}\big(R(\bm{x}_0^1),\dots,R(\bm{x}_0^G)\big)},
\end{align*}
the reward centered by the group mean and normalized by the group standard deviation, clipped to $[-5,5]$ for numerical stability. This standardized reward is the weight $r^i_{\mathrm{rvm}}=r^i_{\mathrm{ram}}=r^i_{\mathrm{grpo}}$ used by EPG, PEPG, RVM, RAM, and FlowGRPO in all of our runs. DiffusionNFT is the only exception: its two branches require a convex mixing weight in $[0,1]$, so it maps $r^i_{\mathrm{grpo}}$ to the squashed weight $r^i_{\mathrm{nft}}$, detailed in its paragraph below.

\subsection{Velocity-based Loss Functionals}
\label{subapp:vel_loss}

\paragraph{RVM}
RVM \cref{eq:rvm} is the velocity-regression form we optimize, a reward-weighted velocity regression $r^i_{\mathrm{rvm}}\,\|\vtheta(\bm{x}_t^i,t)-\bm{v}^i\|_2^2$ where the reward acts as a signed coefficient on the endpoint-defined velocity target. Positive rewards encourage the model to match the velocity direction associated with the generated sample, whereas negative rewards discourage that direction. It uses the standardized reward and the ELBO weighting $w(t)$ of \cref{eq:elbo} (\cref{subapp:pg_loss}), and forms no likelihood ratio and simulates no trajectory.

\paragraph{RAM}
RAM \cref{eq:ram} dispenses with the likelihood ratio entirely and instead shapes the velocity-regression target. In our runs the reward weight $r^i_{\mathrm{ram}}$ in \cref{eq:ram} is the same standardized group-relative reward as above (scaled by a multiplier of $1$); it multiplies the residual $\bm{v}^i-\vtheta(\bm{x}_t^i,t)$ between the bridge target and the current velocity, nudging $\vtheta$ away from the frozen reference $\vref$ in proportion to $r^i_{\mathrm{ram}}$. The original formulation instead scales this residual by the raw scalar reward, with no group baseline \citep{bergmeister2026reinforceadjointmatchingscaling}. In either case the reward weight acts as a multiplicative scale on the regression target rather than as a weight on a log-probability, and the reference enters as an additive anchor inside the stop-gradient target instead of as a separate $\beta\,\|\vtheta-\vref\|_2^2$ regularizer.

\paragraph{DiffusionNFT}
DiffusionNFT \cref{eq:nft} also avoids log-probabilities, contrasting the two implicit velocity fields $v^{\pm}_\theta = (1\mp\bar\beta)\bm{v}_{\mathrm{anc}} \pm \bar\beta\vtheta$, where $\bm{v}_{\mathrm{anc}}$ is an EMA-updated velocity (an exponential moving average of the current policy) and $\bar\beta$ is the guidance strength; we use $\bar\beta = 0.1$. Its reward weight is a \emph{squashed} reward $r^i_{\mathrm{nft}}\in[0,1]$ that acts as a convex mixing weight between a positive branch, pulled toward the bridge target $\bm{v}^i$, and a negative branch, pushed away from it. Because the weights $r^i_{\mathrm{nft}}$ and $1-r^i_{\mathrm{nft}}$ must form a convex combination, the standardized reward cannot be used directly; we map it into $[0,1]$ by clipping and rescaling. Concretely, $r^i_{\mathrm{nft}}$ is obtained from the standardized group-relative reward $r^i_{\mathrm{grpo}}$ of \cref{sec:bg-objectives} by
\begin{align*}
r^i_{\mathrm{nft}} = \operatorname{clip}\!\left( \tfrac{1}{2} + \tfrac{1}{10}\,\operatorname{clip}(r^i_{\mathrm{grpo}}, -5, 5),\; 0,\; 1 \right),
\end{align*}
so that a group-average sample ($r^i_{\mathrm{grpo}} = 0$) maps to $\tfrac12$, while $r^i_{\mathrm{grpo}} = +5$ and $r^i_{\mathrm{grpo}} = -5$ map to $1$ and $0$, respectively (the inner clip range $\pm 5$ is the clipping constant used throughout). High-reward samples ($r^i_{\mathrm{nft}}\to1$) reinforce the positive branch, while low-reward samples ($r^i_{\mathrm{nft}}\to0$) suppress those generations through the $-\bar\beta\vtheta$ term.

\subsection{Connections between the Loss Functionals}
\label{subapp:connections}

\paragraph{Proof of the unification}
RAM and DiffusionNFT are motivated by different principles: RAM is derived from adjoint matching~\citep{bergmeister2026reinforceadjointmatchingscaling}, whereas DiffusionNFT is derived from contrastive flow matching~\citep{zheng2025diffusionnft}. Nevertheless, their velocity-field updates share the same anchored regression form as \cref{eq:rvm}. Specifically, RAM corresponds to choosing the frozen reference velocity as the anchor with unit anchor strength, while DiffusionNFT corresponds to choosing the EMA velocity as the anchor with reward-dependent anchor strength $\bar\beta-r^i_\mathrm{rvm}$.

\begin{proof}[Proof of \cref{thm:unify}]
Omit the shared argument $(\bm{x}_t^i,t)$ on every velocity and recall the RVM gradient $\nabla_{\vtheta} \cL_{\mathrm{rvm}} = r^i(\vtheta-\bm{v}^i) + \beta(\vtheta-\bm{v}_{\mathrm{anc}})$.

\noindent\emph{(i) RAM.} The target of \cref{eq:ram} is detached, so the squared-error gradient is current minus target,
\begin{align}
\nabla_{\vtheta} \cL_{\mathrm{ram}} = \vtheta - \operatorname{sg}\!\big[\vref + r^i_{\mathrm{ram}}(\bm{v}^i-\vtheta)\big] = r^i_{\mathrm{ram}}\,(\vtheta-\bm{v}^i) + (\vtheta-\vref), \label{eq:ram-unify}
\end{align}
which is $\nabla_{\vtheta} \cL_{\mathrm{rvm}}$ at $\bm{v}_{\mathrm{anc}}=\vref$, $\beta=1$, $r^i=r^i_{\mathrm{ram}}$. The two gradients agree at every $\vtheta$.

\noindent\emph{(ii) DiffusionNFT.} The branches $\bm{v}^{\pm}_\theta = \bm{v}_{\mathrm{anc}} \pm \bar\beta(\vtheta-\bm{v}_{\mathrm{anc}})$ are affine in $\vtheta$ with slope $\pm\bar\beta$, so differentiating \cref{eq:nft} and substituting the branches gives, with $r^i_{\mathrm{rvm}} = 2r^i_{\mathrm{nft}}-1$,
\begin{align}
\nabla_{\vtheta} \cL_{\mathrm{nft}}
= r^i_{\mathrm{nft}}\,(\bm{v}^{+}_\theta-\bm{v}^i) - (1-r^i_{\mathrm{nft}})\,(\bm{v}^{-}_\theta-\bm{v}^i)
= \bar\beta\,(\vtheta-\bm{v}_{\mathrm{anc}}) - r^i_{\mathrm{rvm}}\,(\bm{v}^i-\bm{v}_{\mathrm{anc}}). \label{eq:nft-grad}
\end{align}
Inserting $\bm{v}^i-\bm{v}_{\mathrm{anc}} = (\bm{v}^i-\vtheta)+(\vtheta-\bm{v}_{\mathrm{anc}})$ rearranges the right-hand side into $r^i_{\mathrm{rvm}}\,(\vtheta-\bm{v}^i) + (\bar\beta - r^i_{\mathrm{rvm}})\,(\vtheta-\bm{v}_{\mathrm{anc}})$, which is $\nabla_{\vtheta} \cL_{\mathrm{rvm}}$ at the EMA anchor $\bm{v}_{\mathrm{anc}}$ with reward $r^i=r^i_{\mathrm{rvm}}$ and anchor strength $\beta(r^i_{\mathrm{rvm}}) = \bar\beta - r^i_{\mathrm{rvm}}$. The two gradients agree at every $\vtheta$ and, in this case, the scalar losses also differ only by $\theta$-independent constants, recovering \cref{eq:nft-rvm-main}.
\end{proof} 

\paragraph{A shared velocity-regression form}
Although the three objectives are written differently, their velocity-field gradients can all be represented by a common anchored velocity-regression surrogate at the noisy state $\bm{x}_t^i$:
\begin{align}
    \mathbb{E}_{\bm{x}^{1:G}_0\sim\piold}
\left[ \frac{c(r^i)}{2} \,\big\|\, \big(\vtheta(\bm{x}^i_t , t) - \bm{v}_\mathrm{anc}(\bm{x}^i_t , t)\big) - A(r^i)\,\big(\bm{v}^i - \bm{v}_\mathrm{anc}(\bm{x}^i_t , t)\big)  \big\|_2^2 \right] ,
    \label{eq:shared-form}
\end{align}
where $c:\mathbb{R}\to\mathbb{R}$ is a reward-dependent scale and $A:\mathbb{R}\to\mathbb{R}$ a reward-dependent reach. Recall that $\bm{v}^i := \bm{\epsilon}^i - \bm{x}^i_0$. The term $\vtheta-\bm{v}_\mathrm{anc}$ measures how far the current velocity has moved from the anchor, while $\bm{v}^i-\bm{v}_\mathrm{anc}$ is the update direction from the anchor toward the flow-matching target $\bm{v}^i$; the reach $A(r^i)$ sets how far the regression target moves along this direction and the scale $c(r^i)$ its overall weight. Each method is thus fixed by its anchor $\bm{v}_\mathrm{anc}$ and reward-dependent pair $(c,A)$, read off in \cref{tab:shared-form}. For the RVM row, this completed-square parameterization assumes $r_\mathrm{rvm}+\beta\ne0$; the original two-term objective \cref{eq:rvm} remains the definition in degenerate cases. Notably, the reach of DiffusionNFT divides the signed reward by the guidance strength, so its regression target lands beyond $\bm{v}^i$ whenever $2r^i_{\mathrm{nft}}-1>\bar\beta$.

\begin{table}[h]
\centering
\footnotesize
\caption{\textbf{Velocity-based objectives through the shared regression form} \cref{eq:shared-form}.
With $\piold := \pitheta$, $\beta = 1$, and $\bm{v}_\mathrm{anc} := \bm{v}_\mathrm{ref}$, RAM's gradient matches that of RVM. With $\bm{v}_\mathrm{anc} :=$ EMA velocity, $r_\mathrm{rvm} = 2r_\mathrm{nft} - 1$, and $\beta = \bar\beta - r_\mathrm{rvm}$, DiffusionNFT equals RVM up to $\theta$-independent constants.}
\label{tab:shared-form}
\begin{tabular}{lcccc}
\toprule
Method & Sampling policy & Anchor $\bm{v}_\mathrm{anc}$ & Scale $c(r)$ & Reach $A(r)$ \\
\midrule
RAM & $\pitheta$ & $\vref$ & {$r_\mathrm{ram}+1$} & $\tfrac{{r^i_\mathrm{ram}}}{{r_\mathrm{ram}+1}}$ \\
DiffusionNFT & $\piold$ & EMA velocity & {$\bar\beta$} & {$(2r_{\mathrm{nft}}-1)/\bar\beta$} \\
RVM (ours) & $\piold$ & any & ${\beta} {\big({r_\mathrm{rvm}}/{\beta}+1\big)}$ & $ \tfrac{{r_\mathrm{rvm}/{\beta}}}{{r_\mathrm{rvm}/\beta+1}}$ \\
\bottomrule
\end{tabular}
\end{table}

\subsection{Policy-gradient-based Loss Functionals}
\label{subapp:pg_loss}

The objectives in \cref{subapp:vel_loss} act directly on the velocity field. We now record the policy-gradient losses that weight the log-ratio $\log\rhotheta(\bm{x}_0)$ instead, and show how they reduce to velocity regression. The ratio $\rhotheta(\bm{x}_0)$ and per-sample KL $\operatorname{kl}(\bm{x}_0)$ are intractable, so these methods estimate them from a single noisy state by the ELBO-based estimators
\begin{align*}
     \hat{\rho}_\theta(\bm{x}_t,t)&:=\exp \left( w_1 \left[\|\vold(\bm{x}_t,t)-\bm{v}\|_2^2-\|\vtheta(\bm{x}_t,t)-\bm{v}\|_2^2\right] \right), \\
     \widehat{\operatorname{kl}}(\bm{x}_t,t)&:=w_2\|\vtheta(\bm{x}_t,t)-\vref(\bm{x}_t,t)\|_2^2,
\end{align*}
where $w_1$ is the ELBO weighting $w(t)$ applied inside the ratio, so that $\log\hat{\rho}_\theta(\bm{x}_t,t) = w_1[\|\vold-\bm{v}\|_2^2-\|\vtheta-\bm{v}\|_2^2]$, and $w_2=1$ for the KL estimator, following \citet{zheng2025diffusionnft, xue2025advantage}.

\paragraph{ELBO weighting \texorpdfstring{$w(t)$}{w(t)}.}
The weighting $w(t)$ of \cref{eq:elbo} sets $w_1$ above and, equivalently, the per-noise-level weight on the velocity-regression terms of \cref{subapp:vel_loss}. We use one of two schemes: the simple weighting $w(t)=1$, which treats all noise levels equally, or the self-normalized adaptive weighting, which normalizes each per-timestep contribution by its own magnitude to stabilize the weight across noise levels.

\paragraph{FlowGRPO}
FlowGRPO \citep{liu2025flow} is the trajectory-based baseline. It instantiates the clipped GRPO objective \cref{eq:grpo} with a trajectory-based likelihood estimator, estimating $\rhotheta$ from the per-step Gaussian transitions of an SDE sampler and clipping each per-step ratio individually. It uses the same standardized reward $r^i_{\mathrm{grpo}}$; the per-step clipping together with the standard-deviation normalization makes the update conservative but couples the effective step size to the per-prompt reward spread.

\paragraph{EPG and PEPG}
The exact policy gradient EPG \cref{eq:epg} and its proximal variant PEPG are the policy-gradient losses of \citet{choi2026rethinkingdesignspacereinforcement}, written through the log-ratio $\log\rhotheta(\bm{x}_0)$. PEPG additionally subtracts $\log\operatorname{sg}[\rhotheta]$ from the reward weight,
\begin{align}
L_{\mathrm{pepg}}
=
-\,\mathbb{E}_{\bm{x}^{1:G}_0 \sim \piold}
\big[
\operatorname{sg}[\rhotheta(\bm{x}^{i}_0)]
\big(r^i_{\mathrm{grpo}}-\log\operatorname{sg}[\rhotheta(\bm{x}^{i}_0)]\big)
\log\rhotheta(\bm{x}^{i}_0)
+
\beta \operatorname{kl}(\bm{x}_0^i)
\big]. \tag{PEPG}\label{eq:pepg}
\end{align}
In both, the importance ratio is detached through $\operatorname{sg}[\rhotheta]$, so the gradient flows only through $\log\rhotheta$; PEPG's extra $-\log\operatorname{sg}[\rhotheta]$ softly penalizes drift of $\pitheta$ from $\piold$ in proportion to the realized log-ratio.

\paragraph{Connection to RVM}
Since $\log\pitheta(\bm{x}_0)$ is intractable, we substitute the single-sample ELBO estimators above, $\log\hat{\rho}_\theta(\bm{x}_t,t) = w_1[\|\vold-\bm{v}\|_2^2-\|\vtheta-\bm{v}\|_2^2]$, into \cref{eq:epg,eq:pepg}. Only the $\|\vtheta-\bm{v}\|_2^2$ term carries a gradient, so after discarding $\theta$-independent constants the resulting gradients take a velocity-regression form. If the detached importance factor and scalar time weight are omitted or absorbed, EPG \cref{eq:epg} gives the RVM update \cref{eq:rvm}. PEPG \cref{eq:pepg} reduces in the same way to a proximal velocity regression, its $-\log\operatorname{sg}[\rhotheta]$ term subtracting a per-sample correction from the reward weight; we retain PEPG as a policy-gradient form but do not train with this proximal velocity loss. This reduction is why the velocity-based forms we optimize contain no log-probability or likelihood ratio.

\subsection{\texorpdfstring{Trajectory-based Objectives as Velocity Regression}{Trajectory-based Objectives as Velocity Regression}}\label{app:traj_vel}

We make the geometric picture of \cref{fig:traj_vs_vel} precise by writing the trajectory-based policy-gradient update as a velocity regression and reading off its regression target. Trajectory-based methods apply the GRPO surrogate of \cref{eq:grpo} to each denoising transition: dropping the clipping for clarity, the per-step objective for a sample $\bm{x}^i$ is
\begin{align}
\mathcal{L}_{\mathrm{traj}} = \mathbb{E}\Big[-r^i_{\mathrm{grpo}}\,\frac{\pitheta(\bm{x}^i_{t-\Delta t}\mid\bm{x}^i_t)}{\piold(\bm{x}^i_{t-\Delta t}\mid\bm{x}^i_t)}\Big],
\end{align}
whose gradient, by the identity $\nabla_\theta\pitheta = \pitheta\nabla_\theta\log\pitheta$, coincides with that of
\begin{align}
\mathbb{E}\Big[-r^i_{\mathrm{grpo}}\,\operatorname{sg}\!\Big[\tfrac{\pitheta(\bm{x}^i_{t-\Delta t}\mid\bm{x}^i_t)}{\piold(\bm{x}^i_{t-\Delta t}\mid\bm{x}^i_t)}\Big]\log\pitheta(\bm{x}^i_{t-\Delta t}\mid\bm{x}^i_t)\Big].
\label{eq:traj-logtrick}
\end{align}
The transition log-likelihood is available in closed form. Under the SDE samplers of FlowGRPO and DanceGRPO, one denoising step is the Gaussian
\begin{align}
\bm{x}_{t-\Delta t} = \bm{\mu}_\theta(\bm{x}_t, t) + \sigma_t\sqrt{\Delta t}\,\bm{\epsilon},\qquad
\bm{\mu}_\theta = \bm{x}_t - \Delta t\Big[\Big(1+\tfrac{\sigma_t^2(1-t)}{2t}\Big)\vtheta(\bm{x}_t,t) + \tfrac{\sigma_t^2}{2t}\bm{x}_t\Big],
\end{align}
where $\sigma_t$ is the noise scale of the sampler; FlowGRPO uses $\sigma_t = a\sqrt{t/(1-t)}$ with stochasticity level $a$, and DanceGRPO a constant. Completing the square in $\vtheta$ gives
\begin{align}
-\log\pitheta(\bm{x}_{t-\Delta t}\mid\bm{x}_t)
= w_{\mathrm{traj}}(t)\,\big\|\vtheta(\bm{x}_t, t) - \bm{v}_{\mathrm{traj}}\big\|_2^2 + C,
\qquad
w_{\mathrm{traj}}(t) = \frac{\big(2t+\sigma_t^2(1-t)\big)^2\,\Delta t}{8\,\sigma_t^2\,t^2},
\label{eq:traj-nll}
\end{align}
with $C$ collecting all $\theta$-independent terms and the induced regression target
\begin{align}
\bm{v}_{\mathrm{traj}}
= \frac{2t}{2t+\sigma_t^2(1-t)}\cdot\frac{\bm{x}_t-\bm{x}_{t-\Delta t}}{\Delta t}
\;-\;\frac{\sigma_t^2}{2t+\sigma_t^2(1-t)}\,\bm{x}_t.
\label{eq:traj-target}
\end{align}
Substituting \cref{eq:traj-nll} into \cref{eq:traj-logtrick} shows that each trajectory-based update is itself a reward-weighted velocity regression, but toward $\bm{v}_{\mathrm{traj}}$ rather than toward the velocity target $\bm{v} = \bm{\epsilon} - \bm{x}_0$ of \cref{eq:rvm}. Up to a positive rescaling and a shift along the current state $\bm{x}_t$, the target \cref{eq:traj-target} is the realized one-step direction $(\bm{x}_t - \bm{x}_{t-\Delta t})/\Delta t$, and in the ODE limit $\sigma_t\to0$ it reduces to it exactly. With a positive reward weight the update therefore moves the model's one-step mean toward the sampled next state $\bm{x}_{t-\Delta t}$, as depicted in \cref{fig:traj_vs_vel}(a). Because $\bm{x}_{t-\Delta t}$ carries the injected sampler noise, this target is a stochastic intermediate that need not lie closer to the clean endpoint $\bm{x}_0$ than $\bm{x}_t$, whereas the velocity-matching target of \cref{fig:traj_vs_vel}(b) always points at the rewarded clean sample $\bm{x}_0$.

\section{Implementation Details}\label{app:impl}

\subsection{Experiment Setup}
We fine-tune each base model with LoRA~\citep{hu2022lora}, keeping the pretrained backbone frozen. Fine-tuning is classifier-free-guidance-free: samples are drawn with a deterministic $16$-step DPM-Solver-2 flow ODE~\citep{lu2022dpmsolver} at guidance scale $1.0$, and the loss is computed as reward-weighted velocity regression at a single noised state per generation, with no policy-ratio clipping. We optimize with AdamW ($\beta_1{=}0.9$, $\beta_2{=}0.999$, weight decay $10^{-4}$, $\epsilon{=}10^{-8}$) at learning rate $5\times10^{-5}$ in bf16, with gradient clipping at norm $1.0$ and a fixed seed. Each iteration draws a group of $K{=}8$ samples for each of $N$ prompts. The default RVM run uses no anchor ($\beta{=}0$), so each update is the reward-weighted velocity regression alone; an exponential-moving-average copy of the policy (decay $0.9$) is maintained and serves as the EMA anchor in the anchor ablation of \cref{subsec:ablation}. Per-model settings are listed in \cref{tab:hparams}.

\paragraph{Training and evaluation data}
Both video models are trained on publicly available prompt sets. For Wan2.1-T2V-1.3B, we use the prompt corpus released by DanceGRPO~\citep{xue2025dancegrpo}, derived from VidProM~\citep{wang2024vidprom}, consisting of $48{,}998$ training prompts, which we use as released without further preprocessing. For SkyReels-I2V, we follow the image-to-video protocol of DanceGRPO and train on $27{,}000$ prompts constructed from ConsisID~\citep{yuan2025identity}; since the model conditions on a reference first frame, we pre-generate one reference image per prompt with FLUX~\citep{flux2024} at the model's native $640\times400$ geometry and cache the resulting prompt--image pairs before training. For evaluation, Wan2.1-T2V-1.3B generates one video per prompt from the extended rewrites of the $946$-prompt VBench suite released by Self Forcing~\citep{huang2025selfforcing} and is scored with the official VBench protocol~\citep{huang2024vbench}, while SkyReels-I2V uses the official VBench-I2V suite~\citep{huang2024vbenchpp} of $1{,}118$ prompt--reference-image pairs, whose reference images match the same native geometry.

\paragraph{Few-step Evaluation} {For the few-step evaluation of \cref{subsec:ablation}, we sub-sample the trained $16$-step noise grid to the noise levels $(0.999,\,0.889,\,0.727,\,0.348)$ for $4$ steps and $(0.999,\,0.960,\,0.889,\,0.727,\,0.348)$ for $5$ steps, each followed by the terminal level $0$, so both schedules keep the same low-noise tail. The base model is evaluated under the identical schedules.}

\subsection{Text-to-Image (OCR) Task}
\label{app:t2i_impl}
For the OCR text-rendering task of \cref{tab:t2i}, we fine-tune Stable Diffusion 3.5-Medium~\citep{esser2024scaling} with LoRA at resolution $512\times512$, following the protocol of \citet{choi2026rethinkingdesignspacereinforcement}.

\paragraph{Hyperparameter settings}
Training is CFG-free throughout: samples are drawn with a deterministic $10$-step DPM-Solver-2 flow ODE at guidance scale $1.0$, and the same guidance scale is used for the training and reference velocities. Each epoch draws $48$ prompts from the OCR prompt set with a group of $K{=}24$ images per prompt, followed by a single gradient update over the accumulated batch. We optimize with AdamW at learning rate $5\times10^{-5}$ with gradient clipping at norm $0.02$, use the adaptive ELBO weighting $w(t)$ (\cref{subapp:pg_loss}), and train for $500$ epochs on $6\times$H100 GPUs (${\sim}41$ hours). The per-sample reward weight is the mean-subtracted reward within each prompt group, and the anchor term uses the frozen reference velocity with strength $\beta=10^{-4}$. The full settings are collected in \cref{tab:hparams}.

\paragraph{Reward}
The training reward is an equal-weight sum of four components: an OCR accuracy reward computed by PaddleOCR on the rendered text, PickScore~\citep{kirstain2023pick} (rescaled by $1/26$ inside the scorer), CLIPScore~\citep{hessel2021clipscore}, and HPSv2.1~\citep{wu2023human}.

\paragraph{Evaluation}
Evaluation follows \citet{choi2026rethinkingdesignspacereinforcement}: images are sampled with $40$ steps, and we report OCR accuracy together with the held-out preference metrics PickScore, ClipScore, HPSv2.1, Aesthetic score, and ImageReward in \cref{tab:t2i}.

\subsection{Video Reward}
\label{app:videoreward}
\textbf{VideoAlign \& HPSv3 Rewards}
Each reward component captures a distinct aspect of video quality. VideoAlign provides text alignment (TA), motion quality (MQ), and visual quality (VQ), which measure prompt faithfulness, temporal motion quality, and frame-level appearance, respectively. We replace VQ with HPSv3 to use a human-preference-based appearance score.

\textbf{Dynamic-tracking (DT) Reward}
Let $\{f_1, \dots , f_n\}$ be equally spaced subframes of generated video $\bm{x}_0$. For each of the $n$ consecutive frame pairs we estimate the flow field $\bm{u}_k=\mathrm{RAFT}(\bm{f}_{k},\bm{f}_{k+1})\in\R^{2\times H\times W}$ and summarize it by the mean magnitude of its fastest pixels,
\begin{equation}\label{eq:dt-pair}
m_k=\frac{1}{|S_k|}\sum_{p\in S_k}\|\bm{u}_k(p)\|_2,
\end{equation}
where $p$ indexes pixels and $S_k$ collects the $\lfloor 0.05\,HW\rfloor$ pixels of largest flow magnitude, so that $m_k$ measures how fast the moving part of the scene travels between the two frames rather than how much of the frame moves. Each $m_k$ is then measured against the resolution-aware motion threshold $\tau=6\min(H,W)/256$, and averaged over the pairs:
\begin{equation}\label{eq:dt}
R_{\mathrm{dt}}(\bm{x}_0)=\frac{1}{n}\sum_{k=1}^{n}\min\!\left(\frac{m_k}{\tau},\,1\right)\in[0,1].
\tag{DT}
\end{equation}

\noindent\textbf{Second Version of the DT Reward for SkyReels-I2V}
For SkyReels-I2V training we use a second version of the dynamic-tracking reward. Since \eqref{eq:dt} grows with any flow, fine-tuning can raise it with frame-wide background motion while the subject stays frozen, whereas genuine subject motion lies in a middle range of motion magnitude. The second version therefore scores the foreground motion $m_{\mathrm{fg}}=\frac{1}{n}\sum_{k=1}^{n}(m_k-\tilde m_k)$, where $\tilde m_k$ is the median flow magnitude of pair $k$ so that frame-wide motion cancels, through a window peaked at the level of genuine motion,
\begin{equation}\label{eq:dt2}
R_{\mathrm{dt2}}(\bm{x}_0)=\operatorname{clip}\!\left(\frac{m_{\mathrm{fg}}-\tau_{\mathrm{lo}}}{\tau_{\mathrm{mid}}-\tau_{\mathrm{lo}}},0,1\right)\operatorname{clip}\!\left(\frac{\tau_{\mathrm{hi}}-m_{\mathrm{fg}}}{\tau_{\mathrm{hi}}-\tau_{\mathrm{mid}}},0,1\right)\in[0,1],
\tag{DT2}
\end{equation}
so that both too little and too much motion score $0$ and the frame-wide hack earns no credit. The thresholds $(\tau_{\mathrm{lo}},\tau_{\mathrm{mid}},\tau_{\mathrm{hi}})=(3,5,8)$, in the same units as $m_k$, are set from human-labeled clips of earlier fine-tuning runs, and $R_{\mathrm{dt2}}$ replaces \eqref{eq:dt} in the reward mixture below at the same weight.

Unless otherwise stated, we use the last mixture. In \cref{subsec:ablation} these three mixtures form the reward ablation, alongside an anchor ablation that varies the anchor velocity (the reference without CFG, the reference under CFG, or an EMA of the current policy) and sweeps its strength $\beta$.

\paragraph{Reward mixture}
All three default runs use the same public reward mixture, a weighted sum of VideoAlign~\citep{liu2025videoalign} text alignment (TA, weight $1.5$) and motion quality (MQ, $1.0$), HPSv3~\citep{ma2025hpsv3} (general $0.1$ and percentile $0.1$), and our RAFT-based dynamic-tracking reward (DT, $0.7$); VideoAlign visual quality (VQ) is not used. Each per-sample reward weight is the group-standardized reward $(R-\operatorname{mean}_g R)/\operatorname{std}_g R$ rescaled by $0.1$, with the standard deviation computed globally across the batch.

\begin{table*}[t]
\centering
\caption{\textbf{Default hyperparameters} for the RVM (reward-weighted velocity matching) runs on each base model. All runs share the optimizer, DPM-2 ODE sampler, and CFG-free setting described above; the two video runs share the reward mixture of \cref{app:videoreward}, while the OCR run uses the reward of \cref{app:t2i_impl}.}
\label{tab:hparams}
\setlength{\tabcolsep}{8pt}
\renewcommand{\arraystretch}{1.1}
\begin{tabular}{l ccc}
\toprule
Hyperparameter & Wan2.1-T2V-1.3B & SkyReels-I2V & SD3.5-M (OCR) \\
\midrule
Task & text-to-video & image-to-video & text-to-image \\
Resolution ($H\times W$) & $480\times832$ & $400\times640$ & $512\times512$ \\
Frames / fps & $53$ / $15$ & $53$ / $15$ & --- \\
Sampler & DPM-2 ODE & DPM-2 ODE & DPM-2 ODE \\
Sampling steps & $16$ & $16$ & $10$ \\
Flow-matching shift & $8.0$ & --- & --- \\
LoRA rank $r$ & $128$ & $32$ & $32$ \\
LoRA $\alpha$ & $64$ & $64$ & $64$ \\
Learning rate & $5\times10^{-5}$ & $5\times10^{-5}$ & $5\times10^{-5}$ \\
Group size $K$ & $8$ & $8$ & $24$ \\
Prompts / iteration & $32$ & $16$ & $48$ \\
Optimizer updates / epoch & $2$ & $2$ & $1$ \\
Samples / update & $128$ & $64$ & $1152$ \\
Training epochs & $90$ & $40$ & $500$ \\
Anchor strength $\beta$ (default) & $0$ & $0$ & $10^{-4}$ \\
Gradient clip (norm) & $1.0$ & $1.0$ & $0.02$ \\
Guidance scale (CFG) & $1.0$ & $1.0$ & $1.0$ \\
GPUs & $8$ & $8$ & $6$ \\
\bottomrule
\end{tabular}
\end{table*}

\section{Text-to-Image Experimental Results}\label{app:t2i}

{In addition to the OCR task reported in \cref{tab:t2i}, we evaluate the PickScore task of \citet{choi2026rethinkingdesignspacereinforcement} on DrawBench, fine-tuning SD3.5-M on the mixture of PickScore, CLIPScore, and HPSv2.1 for $700$ epochs. As shown in \cref{tab:t2i_drawbench}, Ours (RVM) remains competitive.}

\begin{table}[h]
\centering
\caption{{\textbf{Evaluation Results on SD3.5-M (DrawBench)} We report the text-to-image PickScore fine-tuning task, evaluated on DrawBench with the protocol of \citet{choi2026rethinkingdesignspacereinforcement}. The task trains on a multi-reward mixture; \colorbox{gray!12}{gray} cells mark the rewards included in training. \textbf{Bold} marks the best entry per column. $^\dagger$cited numbers from \citet{choi2026rethinkingdesignspacereinforcement}.}}
\label{tab:t2i_drawbench}
\scriptsize
\setlength{\tabcolsep}{5pt}
\renewcommand{\arraystretch}{1}
\begin{tabular}{l ccccc}
\toprule
Method & PickScore & ClipScore & HPSv2.1 & Aesthetic & ImgRwd \\
\midrule
{FlowGRPO$^\dagger$} & \cellgray$23.50$ & \cellgray$0.280$ & \cellgray$0.316$ & $5.90$ & $1.29$ \\
{DiffusionNFT$^\dagger$} & \cellgray$23.61$ & \cellgray$0.288$ & \cellgray$\mathbf{0.344}$ & $6.04$ & $\mathbf{1.46}$ \\
{PEPG$^\dagger$} & \cellgray$\mathbf{23.68}$ & \cellgray$\mathbf{0.296}$ & \cellgray$0.325$ & $\mathbf{6.06}$ & $1.45$ \\
{Ours (RVM)} & \cellgray$23.30$ & \cellgray$0.289$ & \cellgray$0.333$ & $5.82$ & $1.38$ \\
\bottomrule
\end{tabular}
\end{table}

\section{Additional Experimental Results}\label{app:results}

\subsection{Per-reward training curves}\label{app:curves}
{\cref{fig:curves_all} demonstrates the aggregate training curves into the individual reward components, VideoAlign TA and MQ, HPSv3, and VBench dynamic degree. The velocity-matching objectives improve every component throughout training, while FlowGRPO stagnates or degrades, consistent with \cref{tab:vbench_t2v}.}

\begin{figure*}[h]
    \centering
    \begin{subfigure}[t]{0.49\textwidth}
      \centering
      \includegraphics[width=\linewidth]{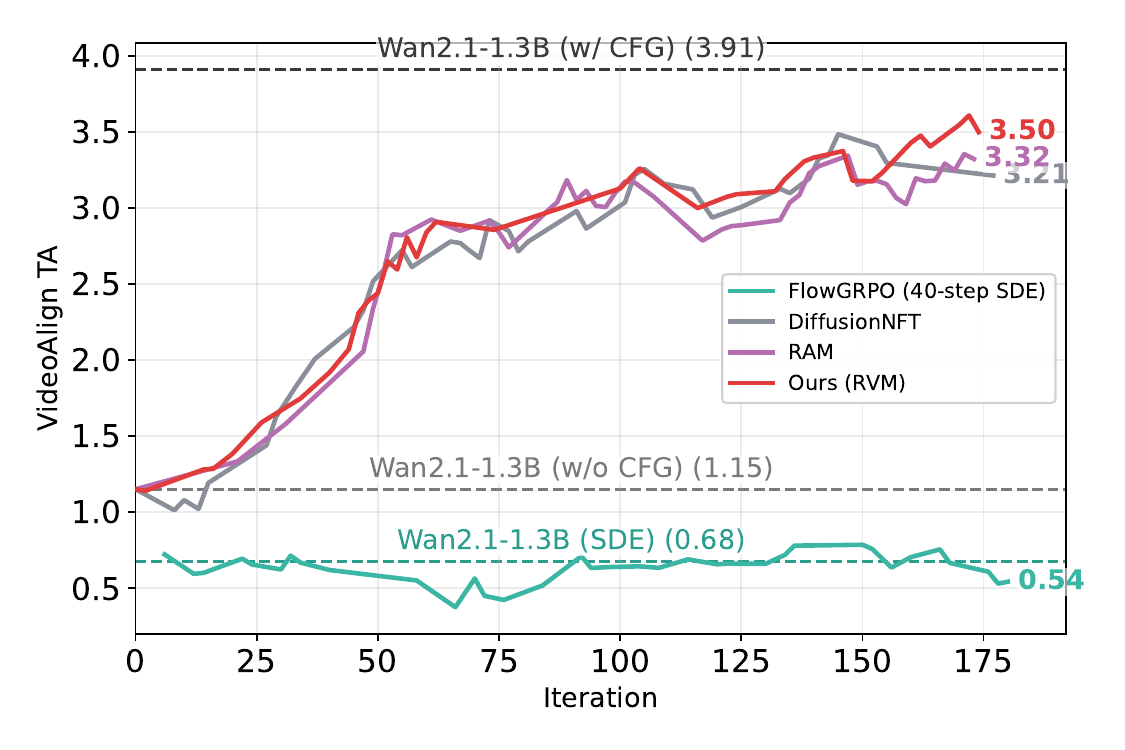}
      \caption{VideoAlign text alignment (TA)}
    \end{subfigure}
    \hfill
    \begin{subfigure}[t]{0.49\textwidth}
      \centering
      \includegraphics[width=\linewidth]{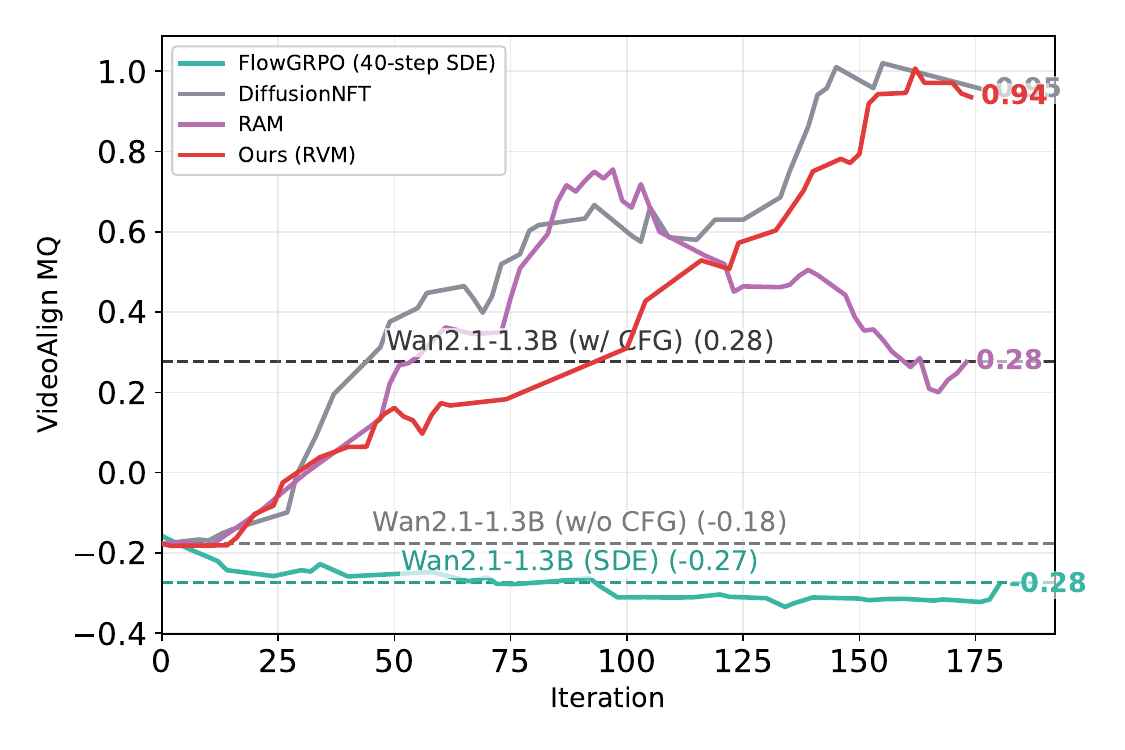}
      \caption{VideoAlign motion quality (MQ)}
    \end{subfigure}

    \vspace{6pt}
    \begin{subfigure}[t]{0.49\textwidth}
      \centering
      \includegraphics[width=\linewidth]{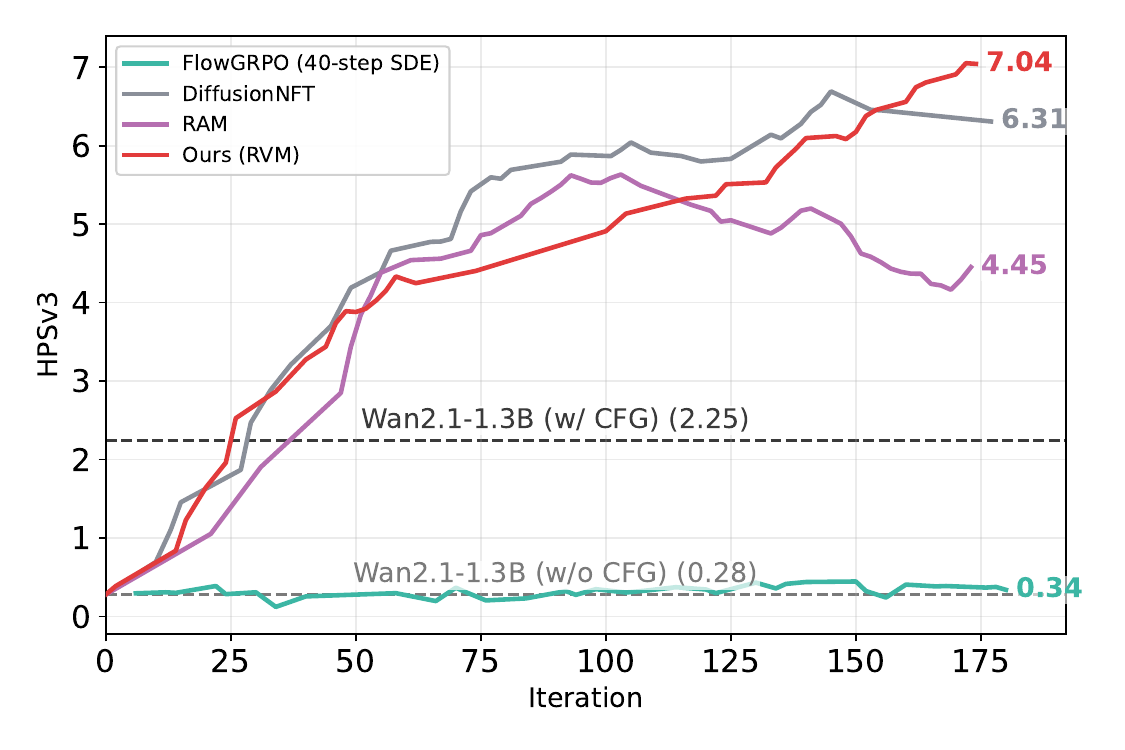}
      \caption{HPSv3}
    \end{subfigure}
    \hfill
    \begin{subfigure}[t]{0.49\textwidth}
      \centering
      \includegraphics[width=\linewidth]{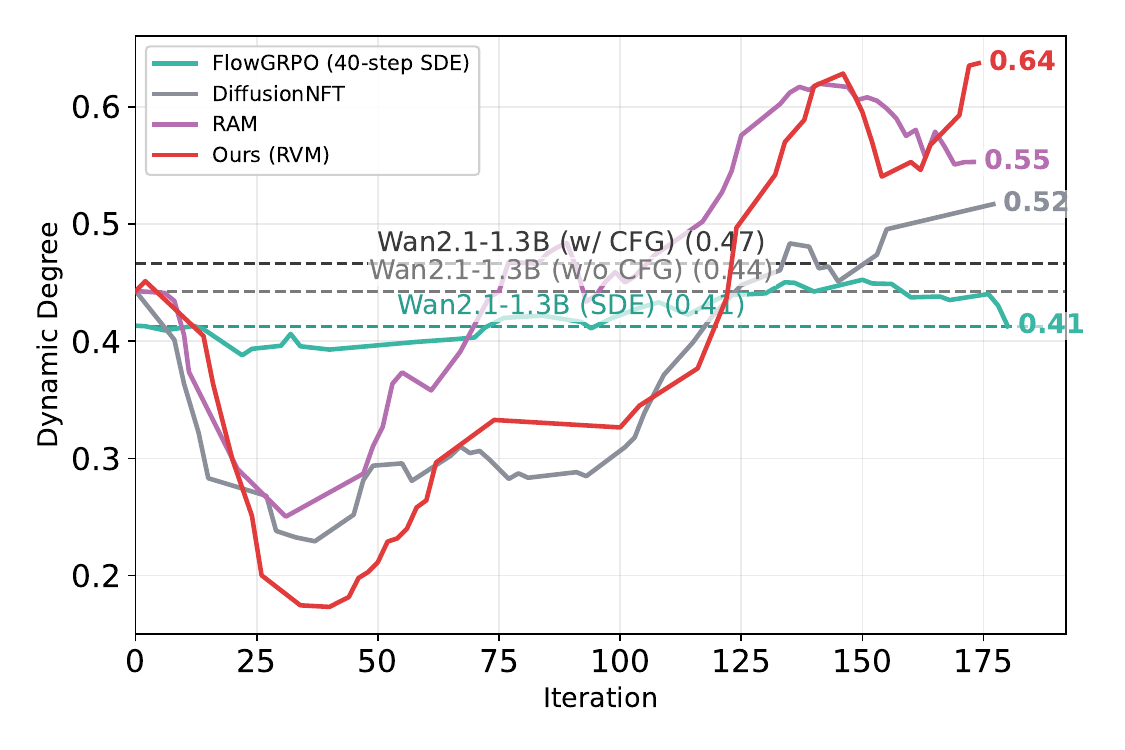}
      \caption{VBench dynamic degree}
    \end{subfigure}
    \caption{{\textbf{Per-reward training curves on Wan2.1-T2V-1.3B} Each panel shows one reward component across training iterations; dashed lines mark the untrained base.}}
    \label{fig:curves_all}
\end{figure*}

\FloatBarrier
\subsection{Full per-dimension VBench results}\label{app:vbench_full}
{\cref{tab:vbench_t2v_full,tab:vbench_i2v_full} expand the aggregate scores of \cref{tab:vbench_t2v,tab:vbench_i2v} into all 16 VBench and 10 VBench-I2V dimensions under the same protocol. On Wan2.1-T2V-1.3B, Ours (RVM) attains the best Quality and Overall aggregates, with the largest margins in dynamic degree, aesthetic quality, and imaging quality, while the CFG-guided base keeps the best Semantic aggregate. On SkyReels-I2V, the three velocity-matching methods reach similar Overall scores, and Ours (RVM) is the only fine-tuned method that raises dynamic degree above the base.}

\begin{table*}[h]
\centering
\caption{{\textbf{Full per-dimension VBench Results on Wan2.1-T2V-1.3B} We report the per-dimension expansion of \cref{tab:vbench_t2v}; all scores $\times100$. $\dagger$ columns are cited numbers. \textbf{Bold} marks the best method per dimension.}}
\label{tab:vbench_t2v_full}
\setlength{\tabcolsep}{4pt}
\renewcommand{\arraystretch}{1}
\begin{adjustbox}{max width=\textwidth}
% GRPO column commented out for now (confusing). GRPO per-dim (Quality): 97.68 / 96.62 / 98.36 / 98.33 / 59.72 / 66.27 / 69.82; (Semantic): 91.00 / 19.55 / 23.11 / 25.44 / 95.25 / 87.20 / 77.37 / 77.00 / 54.22; aggregates Q7 84.52 / S9 77.28 / O16 83.07.
\begin{tabular}{l l c c c c c c c c c}
\toprule
& Dimension & {Wan2.1-1.3B (w/o CFG)} & {Wan2.1-1.3B (w/ CFG)} & FlowGRPO & DanceGRPO$^\dagger$ & TaRoS$^\dagger$ & TaRoS-72B$^\dagger$ & {DiffusionNFT} & RAM & {Ours (RVM)} \\
\midrule
\multirow{7}{*}{\rotatebox{90}{Quality}}
 & subject consistency & $91.78$ & $95.02$ & $91.29$ & $95.55$ & $95.22$ & $95.19$ & $97.52$ & $96.42$ & $\mathbf{98.31}$ \\
 & background consistency & $95.14$ & $\mathbf{97.72}$ & $94.77$ & $96.83$ & $96.77$ & $96.85$ & $96.06$ & $95.25$ & $96.22$ \\
 & temporal flickering & $98.95$ & $\mathbf{99.54}$ & $98.95$ & $99.42$ & $99.41$ & $99.35$ & $97.93$ & $98.62$ & $97.25$ \\
 & motion smoothness & $\mathbf{98.48}$ & $98.27$ & $98.19$ & $98.15$ & $98.28$ & $98.23$ & $98.25$ & $98.43$ & $98.44$ \\
 & dynamic degree & $54.17$ & $65.28$ & $55.56$ & $58.33$ & $57.67$ & $58.33$ & $63.89$ & $61.11$ & $\mathbf{75.00}$ \\
 & aesthetic quality & $52.43$ & $63.77$ & $52.74$ & $58.99$ & $60.88$ & $62.64$ & $65.47$ & $63.13$ & $\mathbf{69.09}$ \\
 & imaging quality & $54.75$ & $62.97$ & $56.29$ & $66.61$ & $67.92$ & $68.49$ & $69.92$ & $66.99$ & $\mathbf{72.02}$ \\
\midrule
\multirow{9}{*}{\rotatebox{90}{Semantic}}
 & human action & $79.00$ & $\mathbf{96.00}$ & $78.00$ & $73.00$ & $74.00$ & $76.00$ & $95.00$ & $91.00$ & $95.00$ \\
 & appearance style & $21.05$ & $\mathbf{21.40}$ & $20.78$ & $20.70$ & $20.82$ & $21.26$ & $19.10$ & $19.74$ & $19.14$ \\
 & temporal style & $22.47$ & $\mathbf{24.62}$ & $22.35$ & $23.51$ & $23.29$ & $23.15$ & $23.28$ & $23.24$ & $22.97$ \\
 & overall consistency & $23.61$ & $\mathbf{26.91}$ & $23.10$ & $23.68$ & $23.67$ & $23.67$ & $25.44$ & $25.36$ & $25.46$ \\
 & object class & $80.38$ & $93.91$ & $79.51$ & $77.68$ & $76.42$ & $74.21$ & $\mathbf{96.68}$ & $\mathbf{96.68}$ & $92.80$ \\
 & multiple objects & $51.45$ & $82.55$ & $48.25$ & $57.54$ & $59.73$ & $63.49$ & $\mathbf{88.49}$ & $81.55$ & $82.16$ \\
 & color & $72.44$ & $\mathbf{91.40}$ & $67.70$ & $90.05$ & $88.26$ & $89.15$ & $72.96$ & $78.16$ & $75.94$ \\
 & spatial relationship & $55.19$ & $\mathbf{78.60}$ & $58.74$ & $62.89$ & $67.16$ & $72.00$ & $75.18$ & $73.49$ & $73.77$ \\
 & scene & $43.46$ & $54.94$ & $44.48$ & $25.79$ & $24.62$ & $25.94$ & $54.72$ & $52.25$ & $\mathbf{55.01}$ \\
\midrule
\multicolumn{2}{l}{Quality (7)} & $78.59$ & $83.72$ & $78.67$ & $82.81$ & $83.23$ & $83.66$ & $84.37$ & $83.35$ & $\mathbf{86.09}$ \\
\multicolumn{2}{l}{Semantic (9)} & $65.73$ & $\mathbf{80.64}$ & $64.87$ & $66.06$ & $66.26$ & $68.15$ & $77.28$ & $76.33$ & $76.28$ \\
\multicolumn{2}{l}{Overall (16)} & $76.02$ & $83.10$ & $75.91$ & $79.46$ & $79.84$ & $80.56$ & $82.95$ & $81.95$ & $\mathbf{84.13}$ \\
\bottomrule
\end{tabular}
\end{adjustbox}
\end{table*}

\begin{table*}[h]
\centering
\caption{{\textbf{Full per-dimension VBench-I2V Results on SkyReels-I2V} We report the per-dimension expansion of \cref{tab:vbench_i2v}; all scores $\times100$. SkyReels-I2V-V1 is the untrained base model. \textbf{Bold} marks the best method per dimension.}}
\label{tab:vbench_i2v_full}
\setlength{\tabcolsep}{6pt}
\renewcommand{\arraystretch}{1}
\begin{adjustbox}{max width=\textwidth}
\begin{tabular}{l l c c c c c}
\toprule
& Dimension & {SkyReels-I2V-V1} & FlowGRPO & DiffusionNFT & RAM & {Ours (RVM)} \\
\midrule
\multirow{3}{*}{\rotatebox{90}{I2V}}
& i2v subject              & $91.50$ & $88.53$ & $96.97$ & $\mathbf{97.03}$ & $95.96$ \\
& i2v background           & $93.48$ & $93.18$ & $97.15$ & $\mathbf{97.65}$ & $96.59$ \\
& camera motion            & $31.32$ & $\mathbf{32.63}$ & $30.93$ & $29.88$ & $30.93$ \\
\midrule
\multirow{7}{*}{\rotatebox{90}{Quality}}
& subject consistency      & $86.17$ & $79.63$ & $94.96$ & $\mathbf{95.29}$ & $93.27$ \\
& background consistency   & $91.56$ & $91.33$ & $96.27$ & $\mathbf{96.44}$ & $95.32$ \\
& aesthetic quality        & $53.32$ & $48.61$ & $58.42$ & $\mathbf{58.97}$ & $57.77$ \\
& imaging quality          & $64.13$ & $58.31$ & $68.26$ & $\mathbf{68.62}$ & $68.00$ \\
& temporal flickering      & $96.22$ & $96.58$ & $\mathbf{97.44}$ & $97.28$ & $96.24$ \\
& motion smoothness        & $97.85$ & $97.59$ & $\mathbf{98.53}$ & $98.47$ & $97.94$ \\
& dynamic degree           & $64.63$ & $41.46$ & $47.56$ & $51.22$ & $\mathbf{72.36}$ \\
\midrule
\multicolumn{2}{l}{I2V}      & $87.79$ & $86.00$ & $93.18$ & $\mathbf{93.49}$ & $92.26$ \\
\multicolumn{2}{l}{Quality}  & $75.57$ & $69.93$ & $79.15$ & $79.72$ & $\mathbf{80.28}$ \\
\multicolumn{2}{l}{Overall}  & $81.68$ & $77.97$ & $86.16$ & $\mathbf{86.61}$ & $86.27$ \\
\bottomrule
\end{tabular}
\end{adjustbox}
\end{table*}

\FloatBarrier
\subsection{Additional qualitative comparisons}\label{app:qualitative}
\cref{fig:app_couple} extends \cref{fig:all_comp} to the full set of methods on Wan2.1-T2V-1.3B,\cref{fig:app_bird,fig:app_corgi} provide four additional Wan2.1 comparisons with the base model, and \cref{fig:app_guitar,fig:app_sky_more} give six further SkyReels-I2V examples. All videos use the protocol of \cref{tab:vbench_t2v,tab:vbench_i2v}, and the prompt is shown once below the bottom row.

\begin{figure*}[h]
    \centering
    \includegraphics[width=0.92\linewidth]{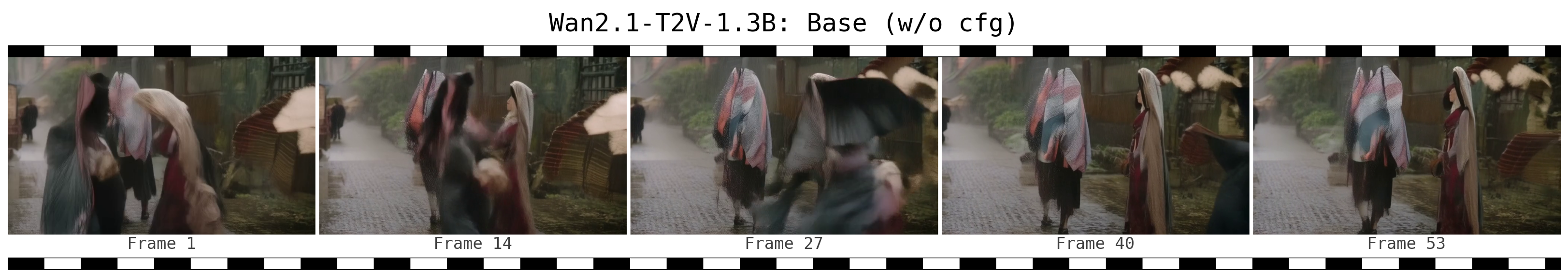}\\[1pt]
    \includegraphics[width=0.92\linewidth]{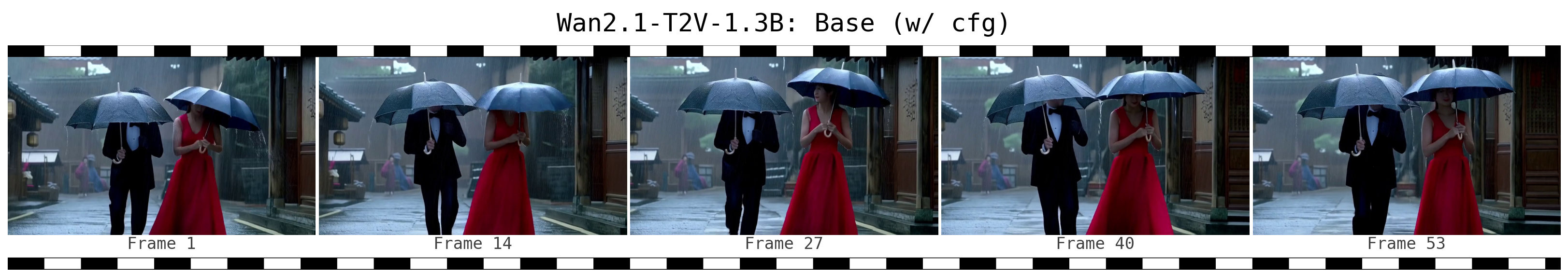}\\[1pt]
    \includegraphics[width=0.92\linewidth]{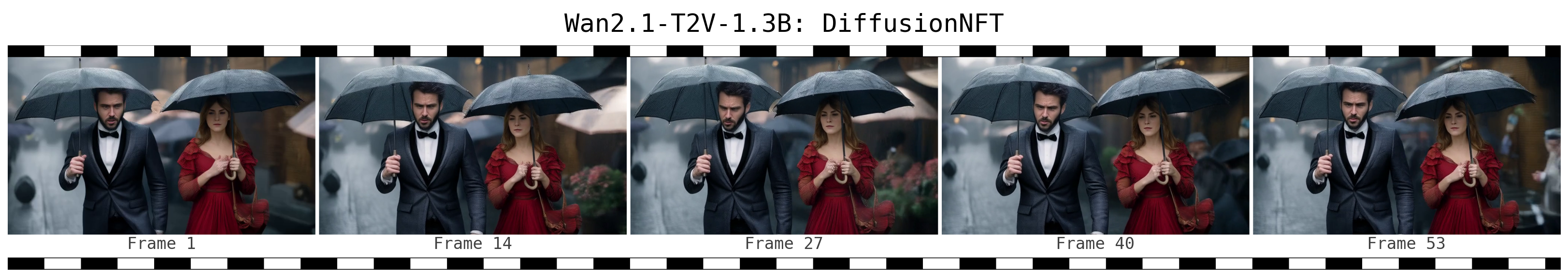}\\[1pt]
    \includegraphics[width=0.92\linewidth]{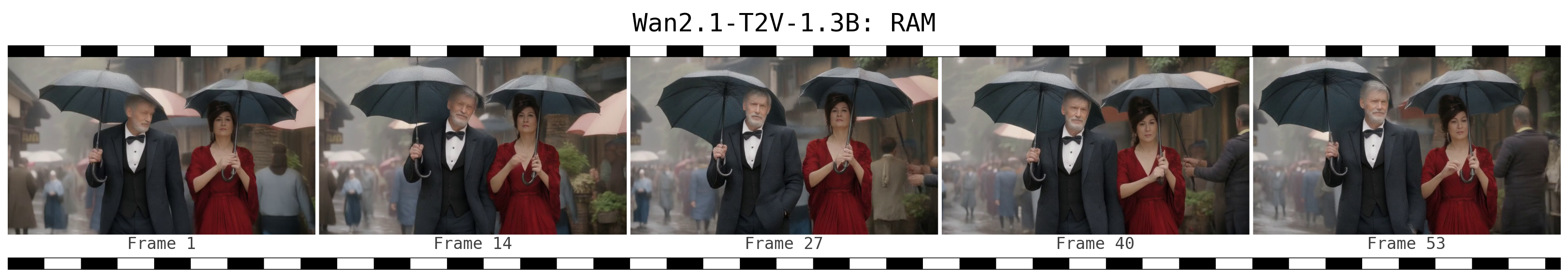}\\[1pt]
    \includegraphics[width=0.92\linewidth]{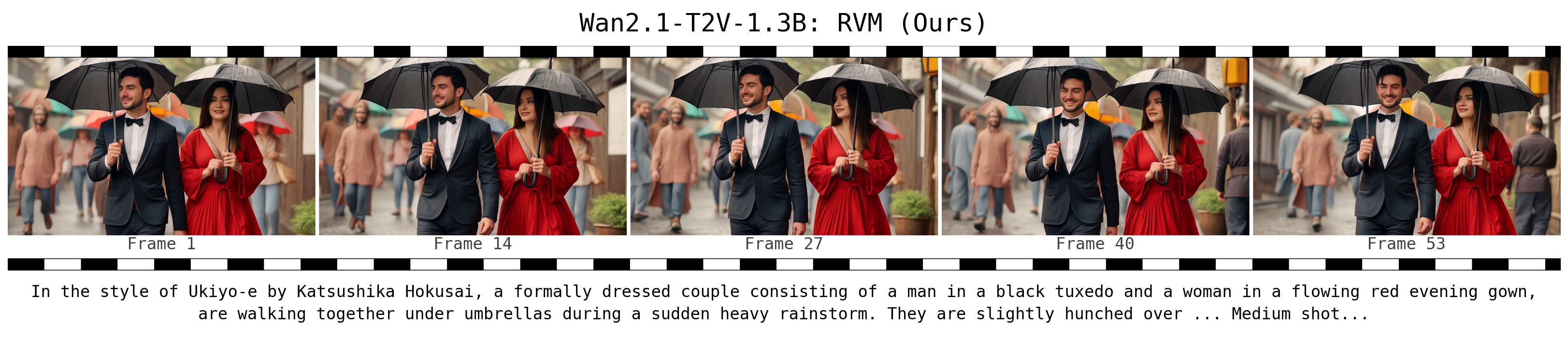}
    \caption{\textbf{Additional qualitative comparison on Wan2.1-T2V-1.3B} Frames $1, 14, 27, 40, 53$ of each method for the same prompt; top to bottom: Wan2.1-1.3B (w/o CFG), Wan2.1-1.3B (w/ CFG), DiffusionNFT, RAM, Ours (RVM).}
    \label{fig:app_couple}
\end{figure*}

\begin{figure*}[h]
    \centering
    \includegraphics[width=0.92\linewidth]{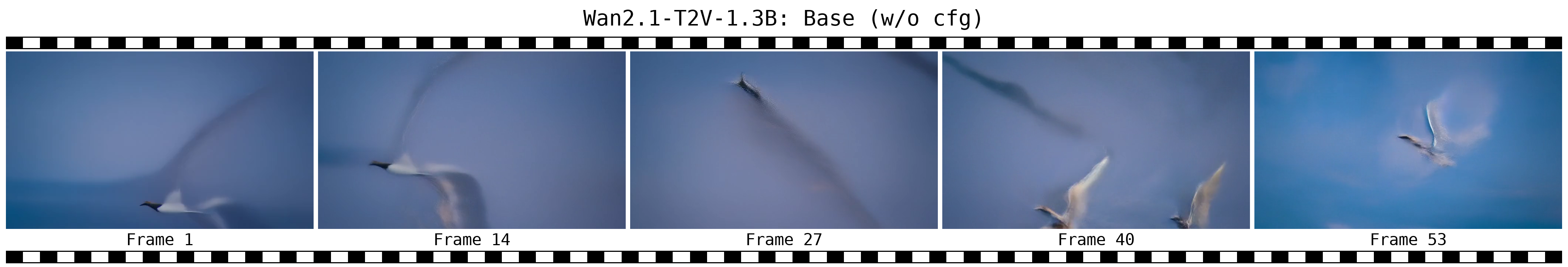}\\[1pt]
    \includegraphics[width=0.92\linewidth]{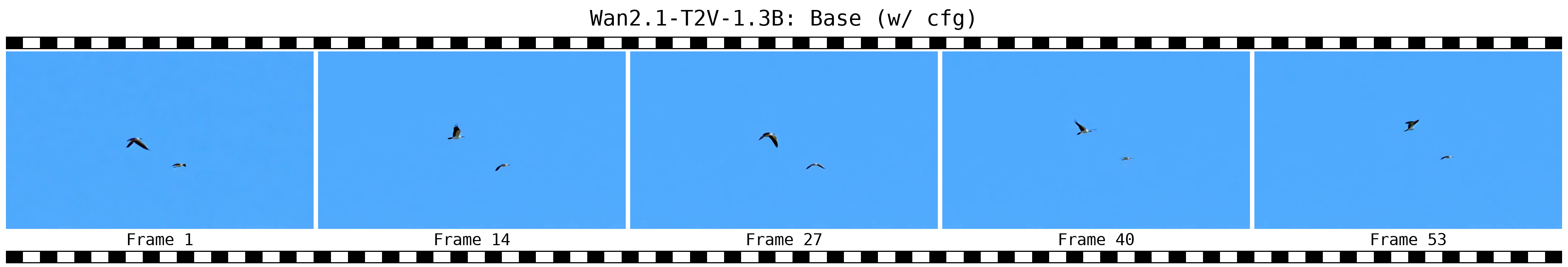}\\[1pt]
    \includegraphics[width=0.92\linewidth]{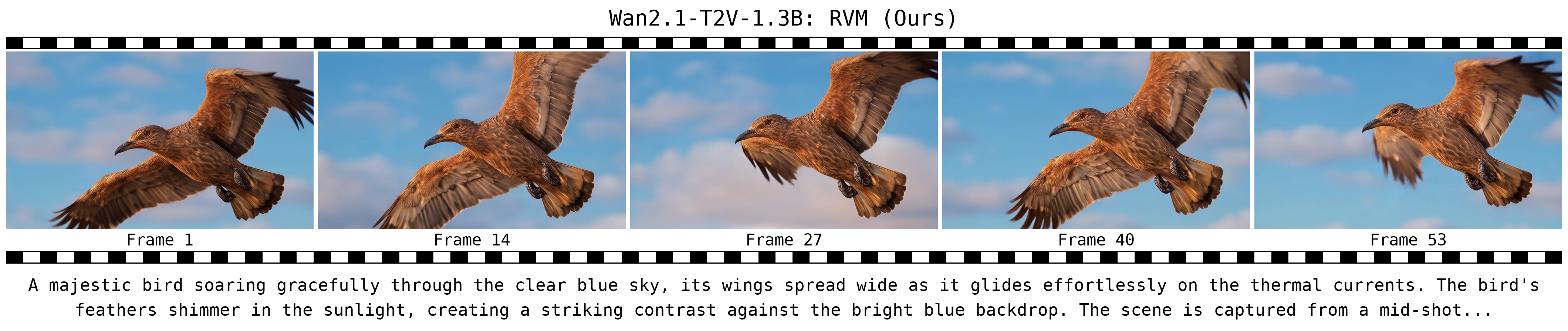}\\[6pt]
    \includegraphics[width=0.92\linewidth]{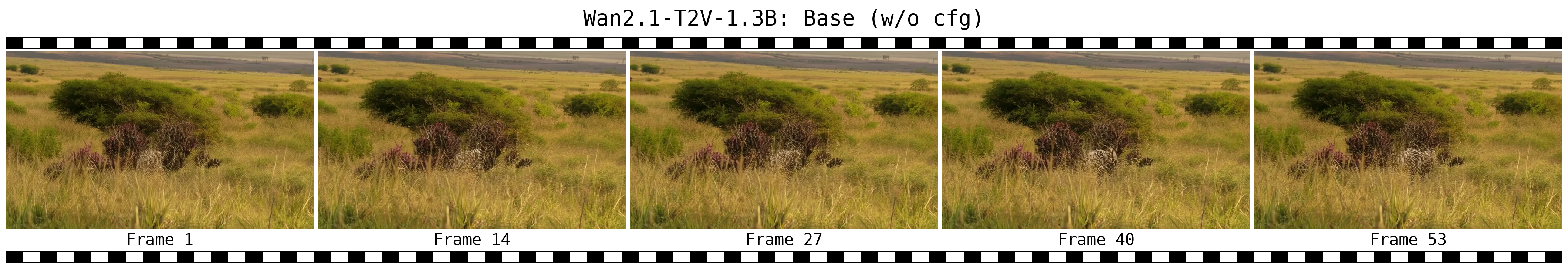}\\[1pt]
    \includegraphics[width=0.92\linewidth]{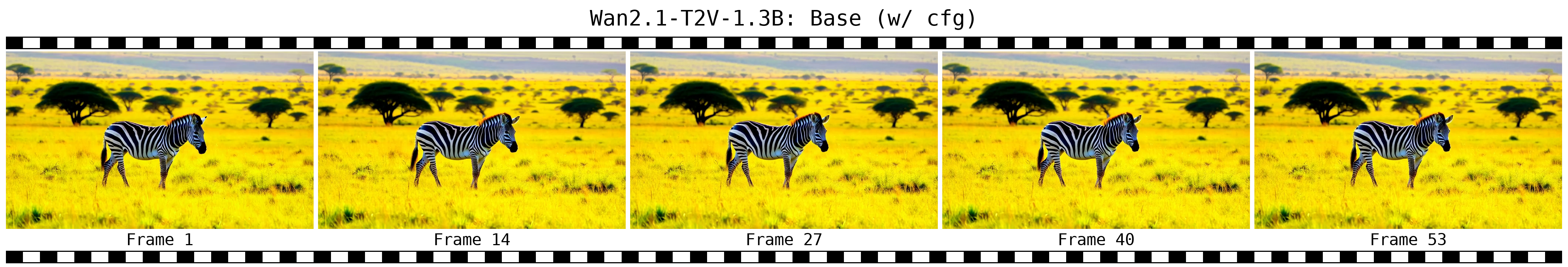}\\[1pt]
    \includegraphics[width=0.92\linewidth]{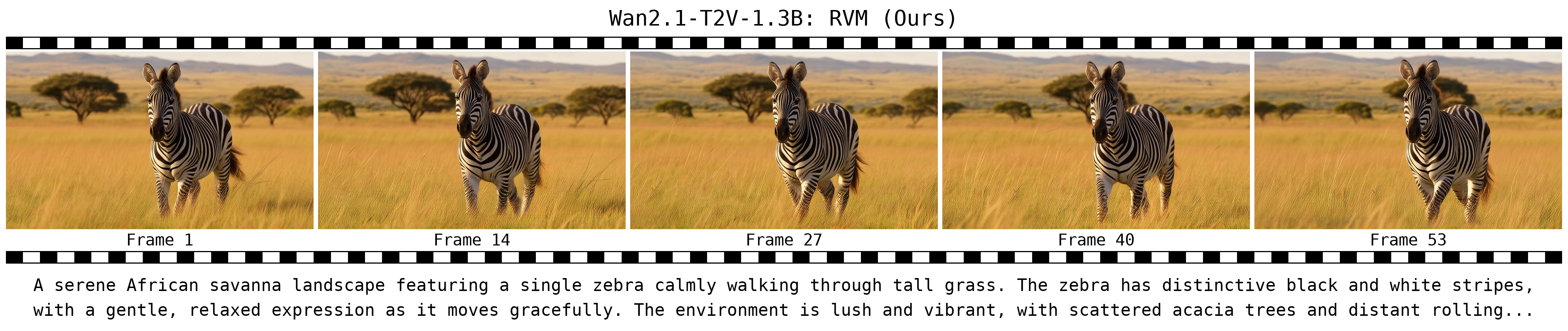}
    \caption{\textbf{Additional qualitative comparison on Wan2.1-T2V-1.3B}. Frames $1, 14, 27, 40, 53$ of each method for two prompts; within each group of three rows, top to bottom: Wan2.1-1.3B (w/o CFG), Wan2.1-1.3B (w/ CFG), Ours (RVM).}
    \label{fig:app_bird}
\end{figure*}

\begin{figure*}[h]
    \centering
    \includegraphics[width=0.92\linewidth]{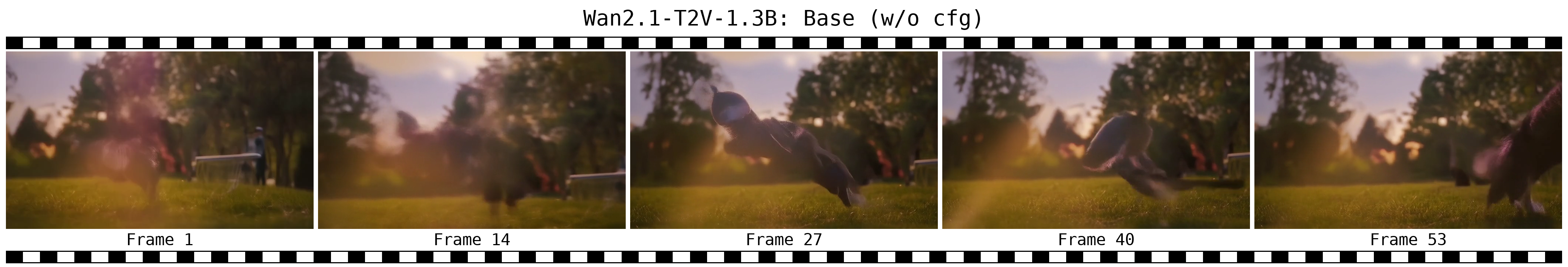}\\[1pt]
    \includegraphics[width=0.92\linewidth]{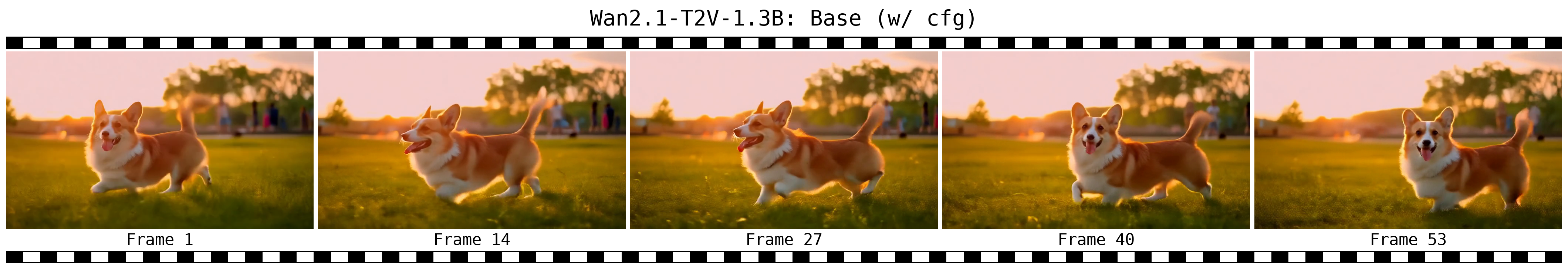}\\[1pt]
    \includegraphics[width=0.92\linewidth]{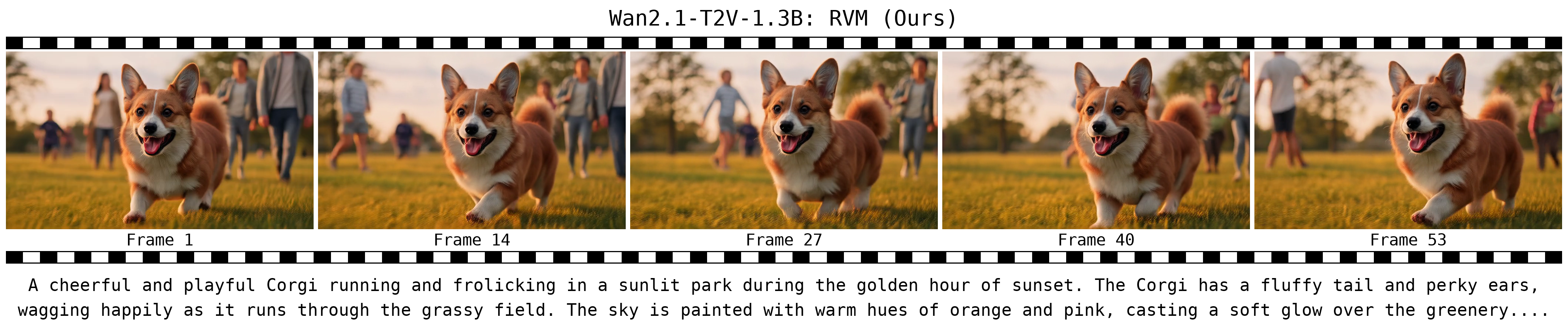}\\[6pt]
    \includegraphics[width=0.92\linewidth]{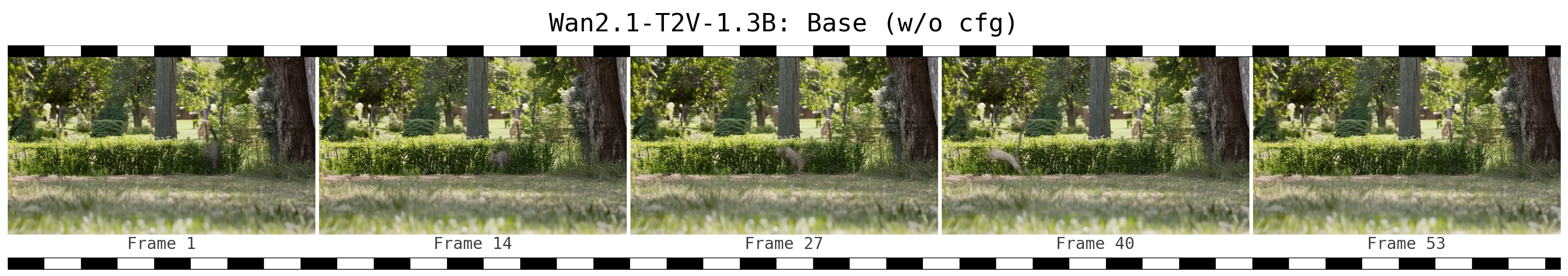}\\[1pt]
    \includegraphics[width=0.92\linewidth]{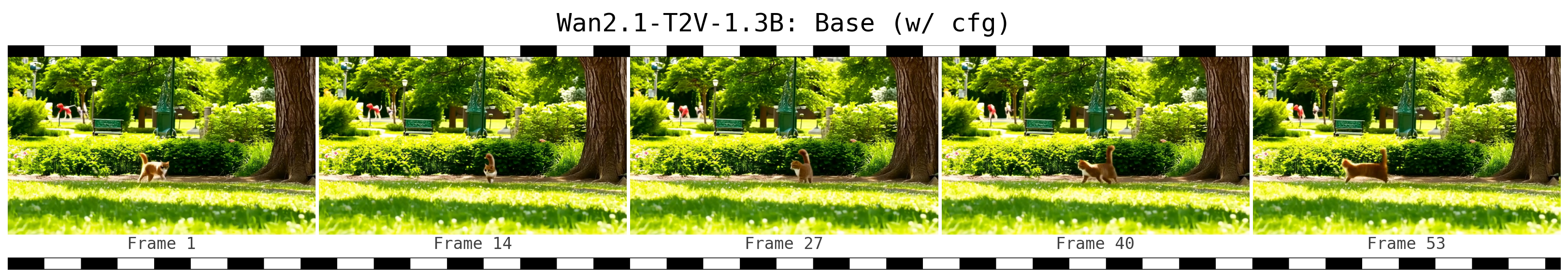}\\[1pt]
    \includegraphics[width=0.92\linewidth]{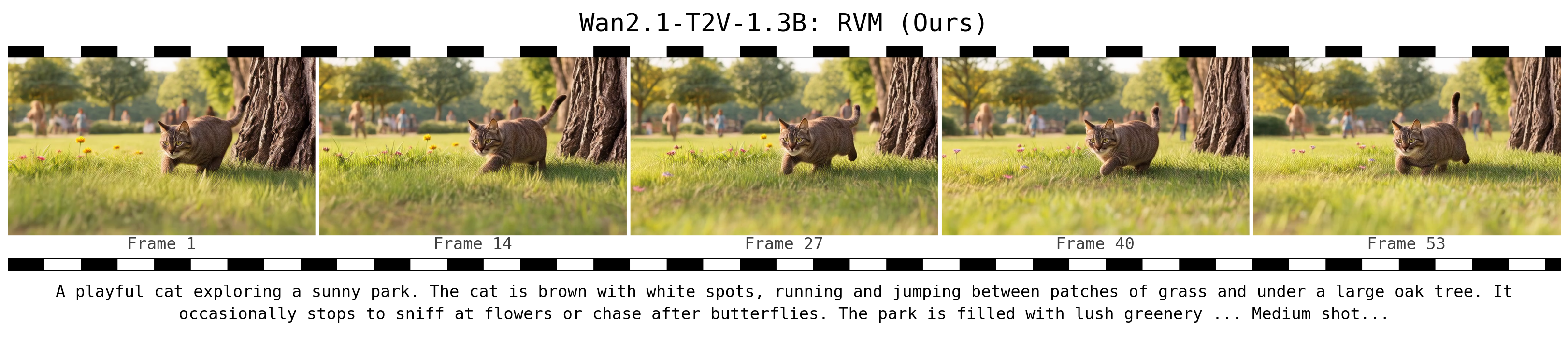}
    \caption{\textbf{Additional qualitative comparison on Wan2.1-T2V-1.3B} Frames $1, 14, 27, 40, 53$ of each method for two prompts; within each group of three rows, top to bottom: Wan2.1-1.3B (w/o CFG), Wan2.1-1.3B (w/ CFG), Ours (RVM).}
    \label{fig:app_corgi}
\end{figure*}

\begin{figure*}[h]
    \centering
    \includegraphics[width=0.92\linewidth]{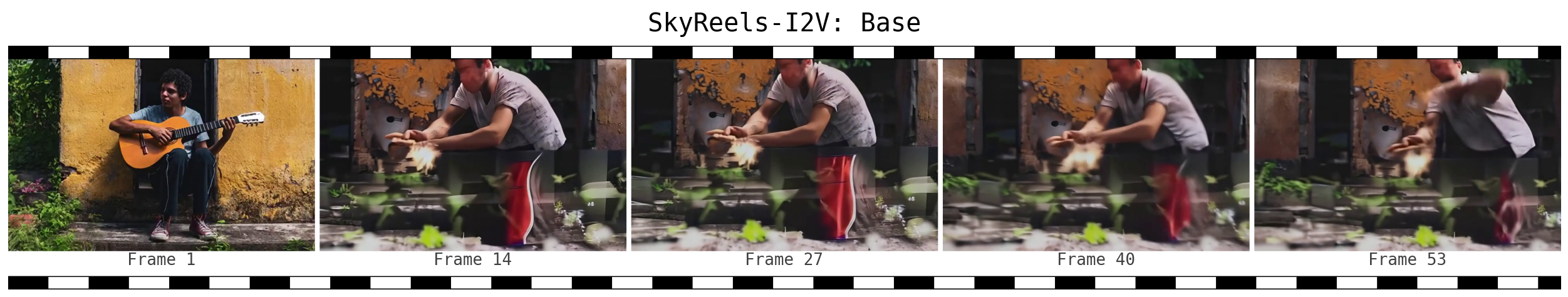}\\[1pt]
    \includegraphics[width=0.92\linewidth]{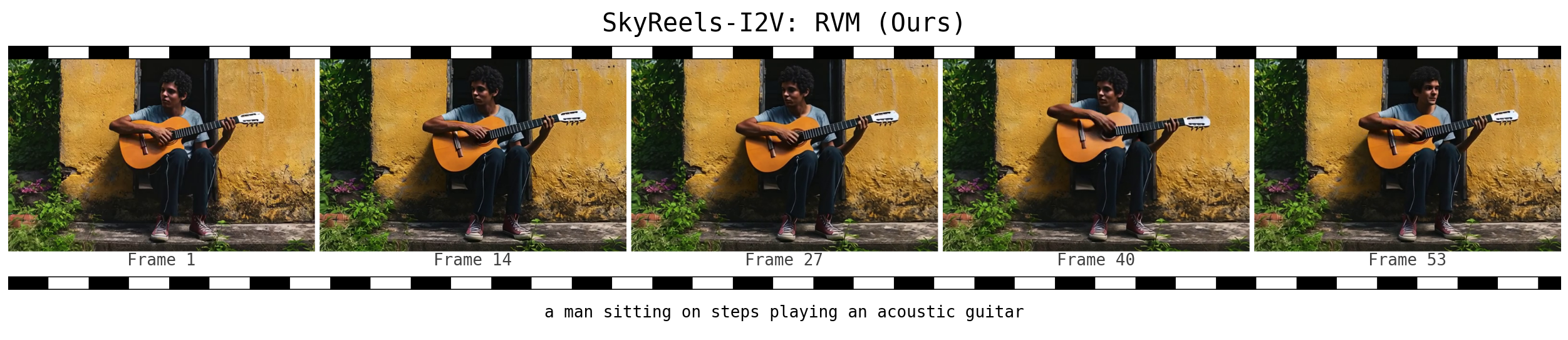}\\
    \includegraphics[width=0.92\linewidth]{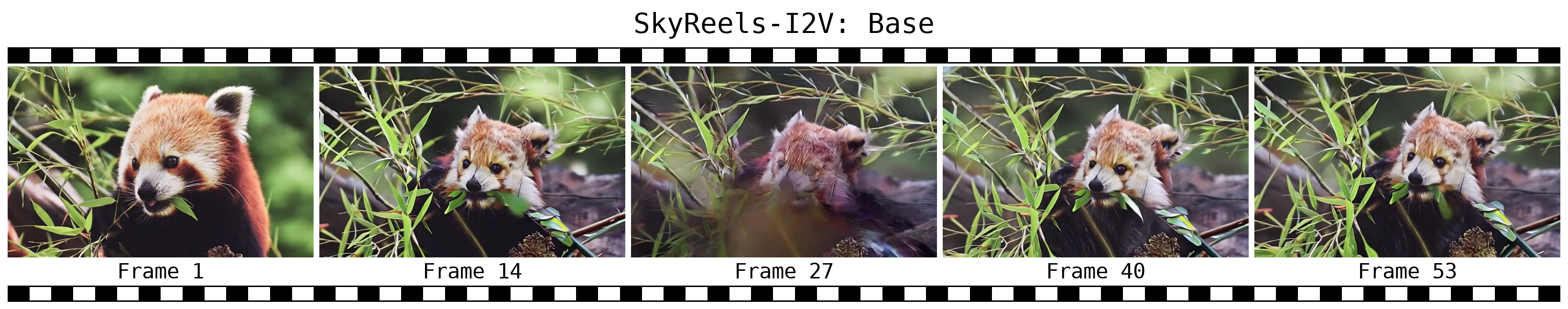}\\[1pt]
    \includegraphics[width=0.92\linewidth]{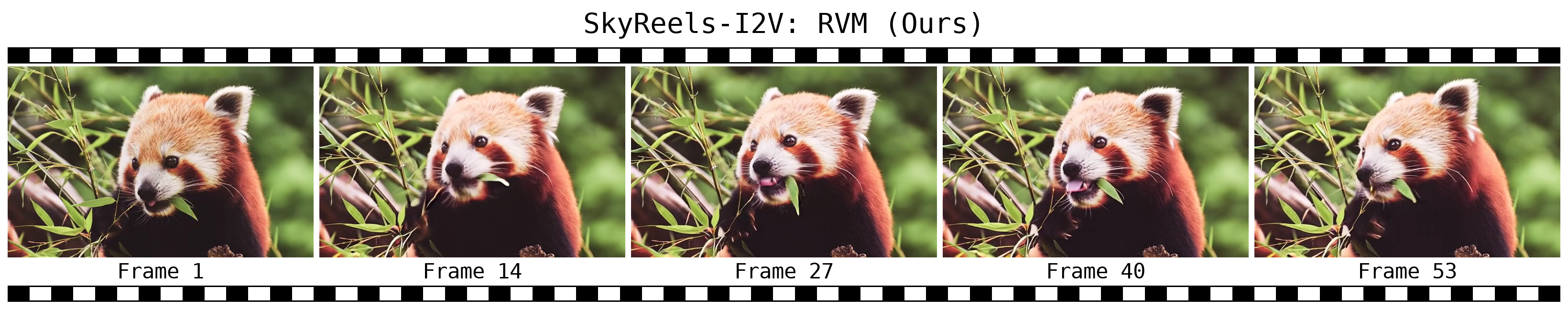}\\
    \includegraphics[width=0.92\linewidth]{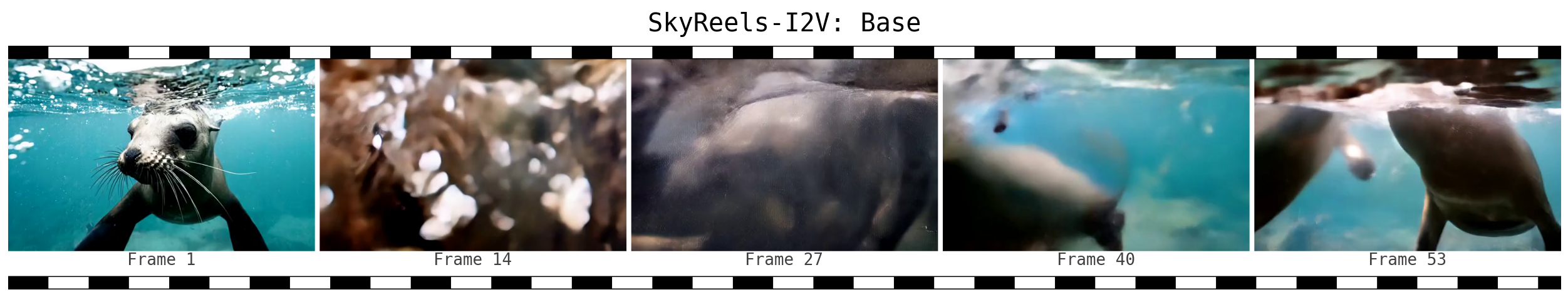}\\[1pt]
    \includegraphics[width=0.92\linewidth]{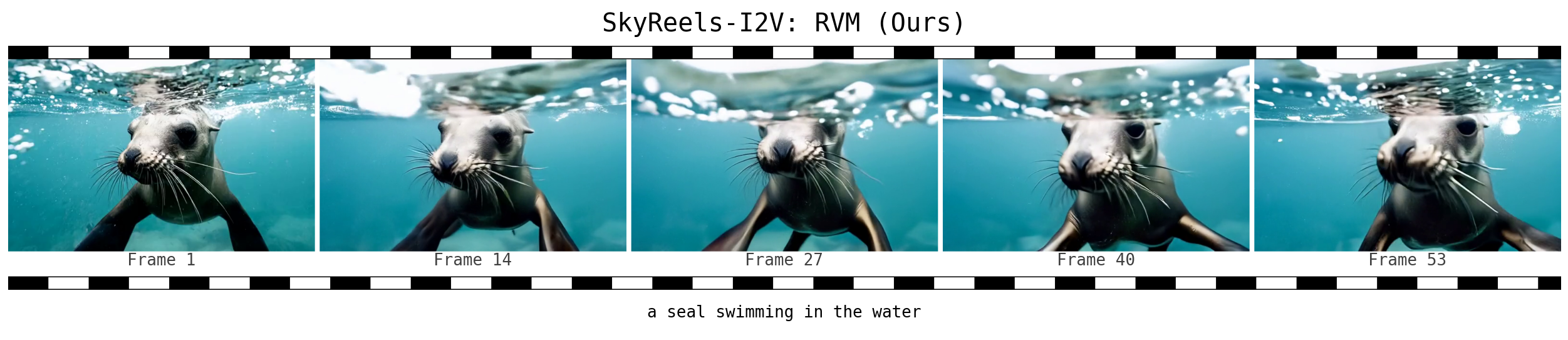}
    \caption{{\textbf{Additional qualitative comparison on SkyReels-I2V} Frames $1, 14, 27, 40, 53$ for three prompts; within each pair, SkyReels-I2V-V1 (top) and Ours (RVM) (bottom).}}
    \label{fig:app_guitar}
\end{figure*}

\begin{figure*}[h]
    \centering
    \includegraphics[width=0.92\linewidth]{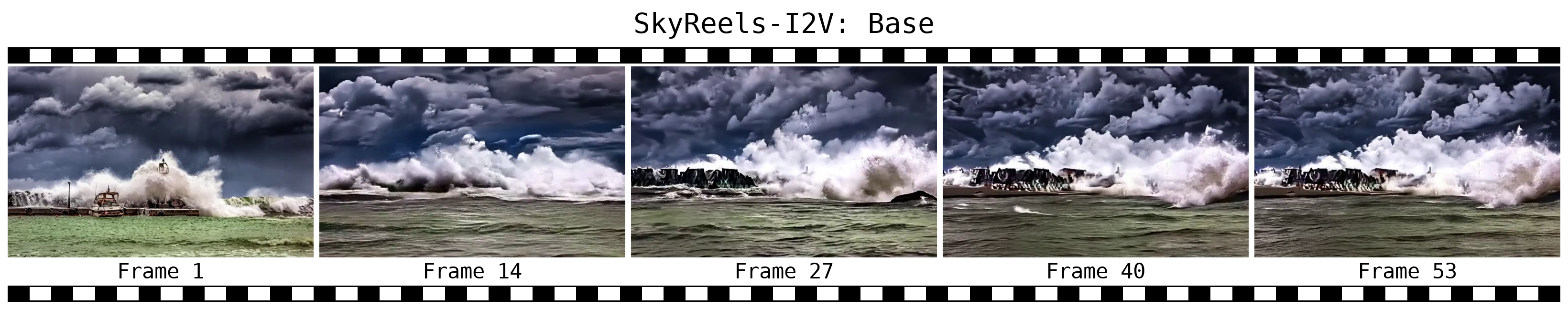}\\[1pt]
    \includegraphics[width=0.92\linewidth]{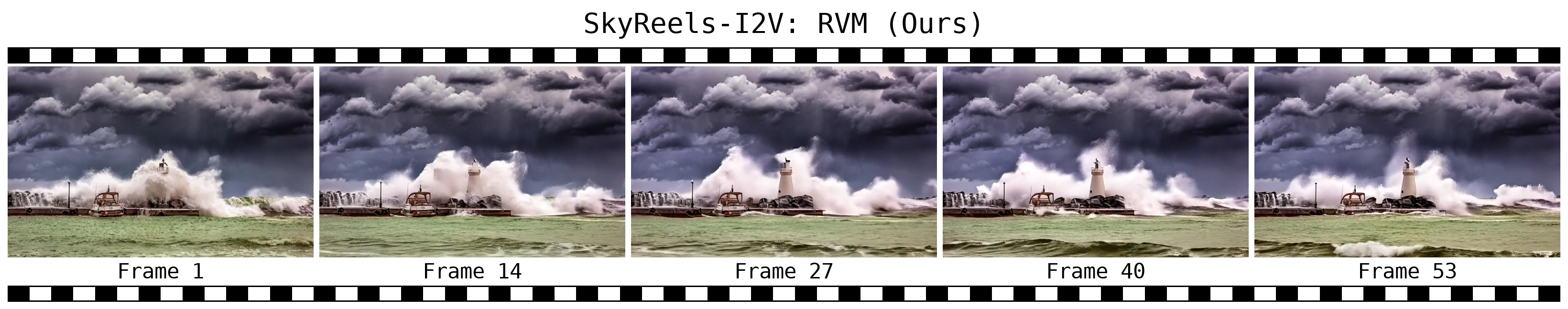}\\
    \includegraphics[width=0.92\linewidth]{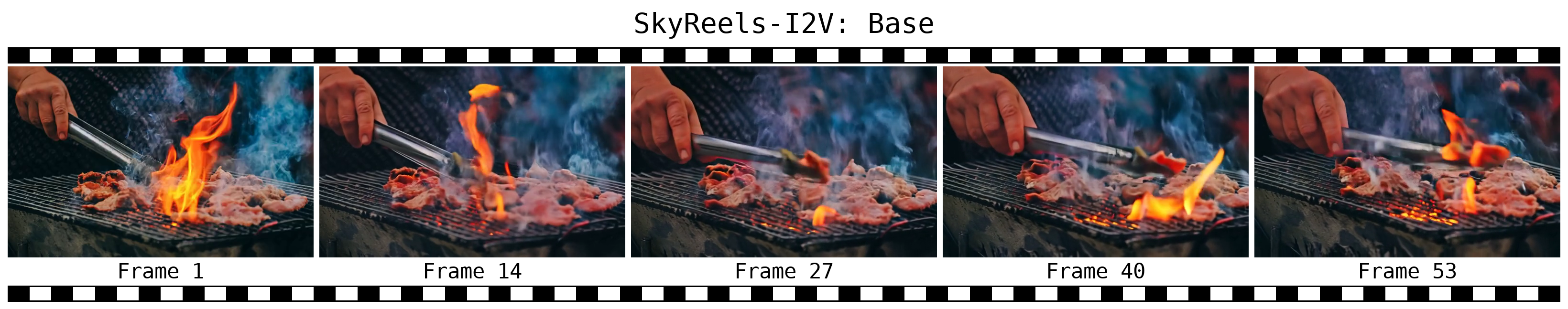}\\[1pt]
    \includegraphics[width=0.92\linewidth]{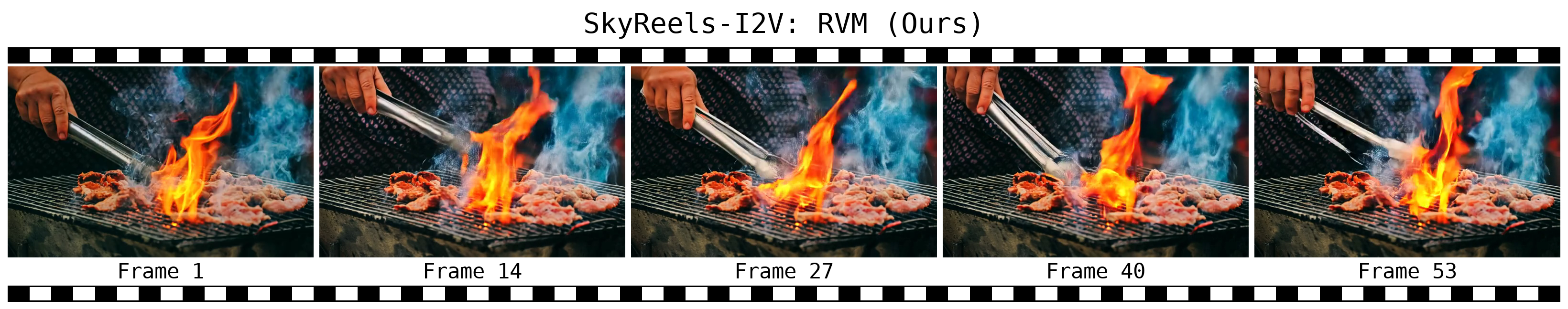}\\
    \includegraphics[width=0.92\linewidth]{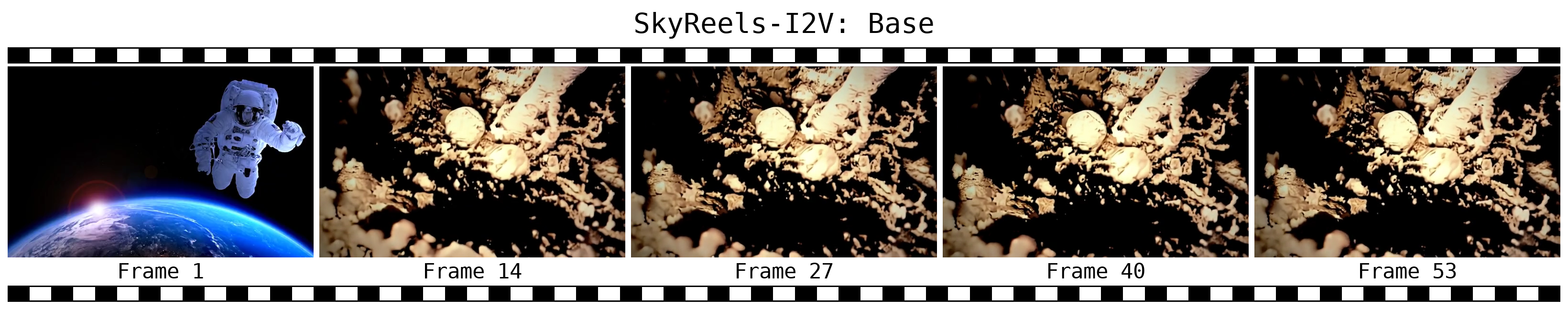}\\[1pt]
    \includegraphics[width=0.92\linewidth]{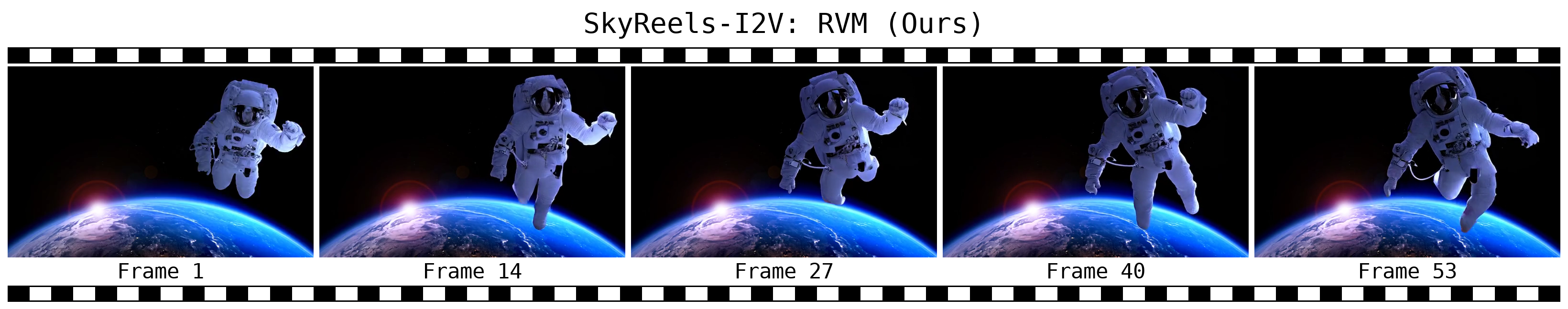}
    \caption{\textbf{Additional qualitative comparison on SkyReels-I2V} Frames $1, 14, 27, 40, 53$ for three prompts; within each pair, SkyReels-I2V-V1 (top) and Ours (RVM) (bottom).}
    \label{fig:app_sky_more}
\end{figure*}

\end{document}

%% file: subtex/package.tex
\usepackage{amsmath,amsthm,amssymb,bm}
\usepackage{amsfonts}
\usepackage{thmtools} 
\usepackage{bbm}
\usepackage{lipsum}
\usepackage{nicefrac}
\usepackage{algorithm}
\usepackage{algorithmic}
\usepackage{placeins}
\theoremstyle{plain}  % You can change the style if needed
\newtheorem{theorem}{Theorem}[section]  % Defines the theorem environment
\newtheoremstyle{remarkstyle}
  {} % Space above
  {} % Space below
  {} % Body font
  {} % Indent amount
  {\bfseries} % Theorem head font
  {.} % Punctuation after theorem head
  {.5em} % Space after theorem head
  {} % Theorem head spec (can be left empty, meaning `normal')
  
\theoremstyle{remarkstyle} \newtheorem*{remark}{Remark}

\definecolor{mygray}{gray}{0.95}
\definecolor{mygray}{gray}{0.95}
\newcommand{\graybox}[1]{%
\begingroup
\setlength{\fboxsep}{0pt}%  
\colorbox{mygray} {% Set's the color of minipage
\begin{minipage}{\linewidth}% 	% Starts minipage
\vspace{-0.5em}%
{#1}%
\end{minipage}%
}%			% End minipage
\endgroup
}
\usepackage{color,soul}
\colorlet{mygreen}{green!55!black}
\definecolor{myyellow1}{HTML}{D89A3C}
\colorlet{myyellow}{myyellow1!90!black}
\colorlet{myblue}{blue!75!green!80!black}
\colorlet{metablue}{blue!60!green!80!black}
\definecolor{nicerblue}{HTML}{417481} % nicer blue

\usepackage{empheq}

%% file: subtex/math.tex
    {%
        \end{gathered}\end{equation}
    }

\def\1{\bm{1}}

\def\vtheta{{\bm{v}_{\theta}}}

\DeclareMathAlphabet{\mathsfit}{\encodingdefault}{\sfdefault}{m}{sl}
\SetMathAlphabet{\mathsfit}{bold}{\encodingdefault}{\sfdefault}{bx}{n}

\newcommand{\pdata}{p_{\rm{data}}}
\newcommand{\E}{\mathbb{E}}

\newcommand{\R}{\mathbb{R}}

\newcommand{\KL}{\mathrm{KL}}

%% file: subtex/macro.tex
\usepackage{xspace}